\def \bN {\Bbb N}
\def \bR {\Bbb R}
\def \cD {{\cal D}}
\def \cI {{\cal I}}
\def \cN {{\cal N}}
\def \cR {{\cal R}}
\def \and {\, \mbox{\rm and}\, }
\def \supp {\,{\rm supp}\,}
\def \relu {\,{\rm ReLU}\,}
\def \diag {\,{\rm diag}\,}
\newtheorem{theorem}{\bf Theorem}[section]
\newtheorem{lemma}[theorem]{\bf Lemma}
\newenvironment{proof}{\noindent{\em Proof:}}{\quad \hfill$\Box$\vspace{2ex}}
\newtheorem{definition}[theorem]{\bf Definition}
\def \bfa {{\mathbf a}}
\def \bfb {{\mathbf b}}
\def \bfc {{\mathbf c}}
\def \bfd {{\mathbf d}}
\def \bfe {{\mathbf e}}
\def \bff {{\mathbf f}}
\def \bfq {{\mathbf q}}
\def \bfw {{\mathbf w}}
\def \bfx {{\mathbf x}}
\def \bfy {{\mathbf y}}
\def \bfA {{\mathbf A}}
\def \bfB {{\mathbf B}}
\def \bfE {{\mathbf E}}
\def \bfI {{\mathbf I}}
\def \bfJ {{\mathbf J}}
\def \bfW {{\mathbf W}}
\def \bfT {{\mathbf T}}
\def \bbN {{\mathbb N}}
\def \bbR {{\mathbb R}}
\def \tbfb {{\widetilde{\mathbf b}}}
\def \tbfx {{\widetilde{\mathbf x}}}
\def \tbfy {{\widetilde{\mathbf y}}}
\def \barbfJ {{\bar {\mathbf J}}}
\def \barbfW {{\bar {\mathbf W}}}
\def \barbfb {{\bar {\mathbf b}}}
\def \ap {\,{\rm Ap}\,}
\def \vec {\,{\rm Vec}\,}
\def \relu {\,{\rm ReLU}\,}
\def \supp {\,{\rm supp}\,}
\def \diag {\,{\rm diag}\,}
\def \cD {{\cal D}}
\def \cI {{\cal I}}
\def \cN {{\cal N}}
\def \cR {{\cal R}}
\def \qquad {\quad\quad}
\def \bqquad {\quad\quad\quad\quad}
\newcommand{\Rmnum}[1]{\expandafter\@slowromancap\romannumeral #1@}
\begin{document}
	
	\title{\bf Convergence Analysis of Deep Residual Networks}
	\author{Wentao Huang\thanks{School of Mathematics (Zhuhai), Sun Yat-sen University, Zhuhai, P.R. China. E-mail
			address: {\it huangwt55@mail2.sysu.edu.cn}.}
		\quad and \quad Haizhang Zhang\thanks{School of Mathematics (Zhuhai), Sun Yat-sen University, Zhuhai, P.R. China. E-mail
			address: {\it zhhaizh2@sysu.edu.cn}. Supported in part by National Natural Science Foundation of China under grant 11971490, and by Natural Science Foundation of Guangdong Province under grant 2018A030313841. Corresponding author.}}
	\date{}
	\maketitle
	\begin{abstract}
Various powerful deep neural network architectures have made great contribution to the exciting successes of deep learning in the past two decades. Among them, deep Residual Networks (ResNets) are of particular importance because they demonstrated great usefulness in computer vision by winning the first place in many deep learning competitions. Also, ResNets were the first class of neural networks in the development history of deep learning that are really deep. It is of mathematical interest and practical meaning to understand the convergence of deep ResNets. We aim at characterizing the convergence of deep ResNets as the depth tends to infinity in terms of the parameters of the networks. Toward this purpose, we first give a matrix-vector description of general deep neural networks with shortcut connections and formulate an explicit expression for the networks by using the notions of activation domains and activation matrices. The convergence is then reduced to the convergence of two series involving infinite products of non-square matrices. By studying the two series, we establish a sufficient condition for pointwise convergence of ResNets. Our result is able to give justification for the design of ResNets. We also conduct experiments on benchmark machine learning data to verify our results.

\medskip

\noindent{\bf Keywords:} deep learning, deep residual networks, ReLU networks, convolution neural networks, ResNets, convergence

\end{abstract}	
	\section{Introduction} \label{sec:introduction}
\setcounter{equation}{0}

In the past two decades, people have witnessed a series of major breakthroughs of artificial intelligence on a wide range of machine learning problems including face recognition, speech recognition, game intelligence, natural language processing, and autonomous navigation, \cite{Goodfellow,LeCun}. The breakthroughs are brought mainly by deep learning in which there are four major ingredients that contribute to the successes. The first three of them are the availability of vast amounts of training data, recent dramatic improvements in computing and storage power from computer engineering, and efficient numerical algorithms
such as the Stochastic Gradient Decent (SGD) algorithms, Adaptive Boosting (AdaBoost) algorithms, and the Expectation-Maximization algorithm (EM), etc. The last ingredient, which is generally considered to the most important one, is a class of deep neural network architectures, such as Convolutional Neural Networks (CNN), LongShort Time Memory (LSTM) networks, Recurrent Neural Networks (RNN), Generative Adversarial Networks (GAN), Deep Belief Networks (DBN), AlexNet \cite{AlexNet}, VGG-Net \cite{VGG}, GoogleLeNet/Inception \cite{GoogleLeNet}, and Residual Networks (ResNet) \cite{KaimingHe, KaimingHe2}.

Stimulated by the great achievements of deep learning, many mathematicians have been fascinated by DNNs, which is viewed as a nonlinear representation system of functions. Most mathematical studies of DNNs are focused on their approximation and expressive powers in representing different classes of functions, \cite{Adcock,I.Daubechies, Devore1, E,Elbrachter,Montanelli1,Montanelli2,Poggio,Shen1,Shen2,Shen3,Wang,Yarotsky,Zhou1}. We refer readers to the two recent surveys \cite{Devore1,Elbrachter} for a detailed introduction and discussion. Two pieces of work most related to our current study are \cite{ConvergenceDeepReluNet} and \cite{ConvergenceDeepCNN}, which first investigated the convergence of DNNs in terms of the weight matrices and bias vectors. Specifically, a sufficient condition for pointwise convergence was established for general DNNs with a fixed width and for CNNs with increasing width in \cite{ConvergenceDeepReluNet} and \cite{ConvergenceDeepCNN}, respectively.

Our current study targets at the Residual Network (ResNet). ResNet becomes well-known when it won the first place in the ILSVRC 2015 classification competition, in which it achieved a 3.57\% top-5 error on the ImageNet test set. It was also the first DNN architecture that is really deep. It contained 152 layers while the champions of ImageNet classification competition in 2013 and 2014, GoogleNet and VGG-Net contained 22 and 19 layers, respectively. ResNet also won the first place on ImageNet detection, ImageNet localization,
COCO detection, and COCO segmentation in ILSVRC \&
COCO 2015 competitions. ResNet hence deserves special attention for these reasons. Indeed, there have been many researches on the generalization ability, approximation power, and optimization of ResNets \cite{KaiXuan, He, AllenZhu, Zou, Frei, Lu, Lin, Qin, Zhang} with the hope of increasing the interpretability of ResNets. For instance, \cite{KaiXuan} partially justified the advantages of deep ResNets over deep FFNets in generalization abilities by comparing the kernel of deep ResNets with that of deep FFNets. It was proved in \cite{Lin} that a very deep ResNet with the ReLU activation function and stacked modules having one neuron per hidden layer can uniformly approximate any Lebesgue integrable function. And a generalization bound of ResNets was obtained in \cite{He}, which guarantees the performance of ResNet on unseen data.

Our series of studies \cite{ConvergenceDeepReluNet,ConvergenceDeepCNN} and the current one are devoted to the convergence of DNNs as the depth of networks tends to infinity. Convergence of a linear expansion of functions such as the Fourier and wavelet series has always been a fundamental problem in pure and applied mathematics. Understanding the convergence for the nonlinear system of DNNs is helpful to reveal the mathematical mystery of deep learning and will be useful in guiding the design of more DNN architectures. We have mentioned that \cite{ConvergenceDeepReluNet,ConvergenceDeepCNN} considered the convergence of a general DNN with a fixed width and CNN with increasing width, respectively. Our study is different and valuable in the following aspects:

\begin{enumerate}
    \item ResNets are special in containing residual blocks and shortcut connections. These two structures were not investigated in \cite{ConvergenceDeepReluNet,ConvergenceDeepCNN}.

    \item The two pieces of work \cite{ConvergenceDeepReluNet,ConvergenceDeepCNN} also studied DNNs with a single channel only, while ResNets and many other useful DNN architectures have multiple channels. Multiple channels will bring major mathematical difficulties in formulating an explicit expression of the function determined by a ResNet.

        \item Convergence of ResNets is of particular interest and can justify the original idea in designing the ResNet \cite{KaimingHe, KaimingHe2}. In some applications, the ResNet designed contained more than 1000 layers. The ability of ResNets in accommodating so many layers lies in the design of ResNet that for a network with so many layers to approximate a function of real interest, the function in deep layers must be close to the identity mapping \cite{KaimingHe2}. The result on convergence of ResNets to be established in this paper will be able to justify this design.

            \item We will carry out experiments based on well-known machine leaning data to verify the results of the paper.
\end{enumerate}

The rest of this paper is organized as follows. In Section \ref{sec:conv_with_zero_padding}, we shall review the definition and notation of convolution with zero padding and define the notation of vectorized version of convolutions. In Section \ref{sec:drn}, we shall describe the matrix form of ResNets with vectorized version of convolution in Section \ref{sec:conv_with_zero_padding}, which expresses a ResNet as a special form of DNNs with shortcut connections. In Section \ref{sec:dnn_with_sc}, we shall define the notion of convergence of DNNs with shortcut connections when new layers are paved to the existing network so that the depth is increasing to infinity. Then by introducing the notions of activation domains and activation matrices, we derive an explicit expression of deep ReLU network with shortcut connections. With the expression, we connect the convergence problem with the existence of two limits involving infinite products of matrices. Sufficient conditions for convergence of such infinite products of matrices are established in Section \ref{sec:convergence_of_dnn_with_sc}. As consequence, a sufficient condition for pointwise convergence of DNNs with shortcut connections is obtained in the same section. Based on those results, we shall establish convenient sufficient conditions for convergence of ResNets in Section \ref{sec:convergence_of_drn}. Finally, we shall justify our theoretical results with numerical experiments on well-known machine learning data.

\section{Convolution with Zero Padding} \label{sec:conv_with_zero_padding}
\setcounter{equation}{0}

	Before reviewing the structure of ResNets, we first explore the matrix form of the convolution operation with zero padding which will be used to formulate a matrix form of ResNets in the next section.
	
	\subsection{Convolution of Vectors with Zero Padding in Single Channel}
		
		We will first examine the operation of convolution of vectors with zero padding in single channle.
		
		Let $f\in\bN$ and assume $\bfw := (w_0, w_1, \cdots, w_{2f}) \in \bbR^{2f+1}$ is filter mask with size $2f+1$ and $\bfx := (x_1, x_2, \cdots, x_d) \in \bbR^{d}$. Then the convolution with zero padding of $\bfx \ast \bfw$ outputs a vector $\bfy := (y_1, y_2, \cdots, y_d) \in \bbR^{d}$, which preserves the dimension of $\bfx$, defined by
		\begin{equation*} \label{eq:convolution 1-d single channel}
			y_i = \sum_{j=\max(1-i, -f)}^{\min(d-i, f)} x_{i+j}w_{f-j},\ 1 \leq i \leq d.
		\end{equation*}
	It is convenient to express the convolution $\bfx \ast \bfw$ as multiplication of $
		\bfx$ with a matrix corresponding to $\bfw$. To this end, we define a Toeplitz type matrix $\bfT(\bfw) \in \bbR^{d \times d}$ by
		\begin{equation*} \label{eq:T(w) element}
			\bfT(\bfw)_{i,j}:= \left\{
				\begin{aligned}
					w_{f-j+i} &,\ -f \leq j - i \leq f \\
					0 &,\ otherwise
				\end{aligned}
			\right.
		\end{equation*}
		and rewrite
		\begin{equation*} \label{eq:matrix form of convolution 1-d}
			\bfy = \bfx \ast \bfw = \bfT(\bfw)\vec(\bfx).
		\end{equation*}
		Here, $\vec(\bfx)$ is a vectorized version of $\bfx$ defined as follows.
		\begin{definition} {\bf (The vec operator)} \label{def:vec}
			Assume $\bfx := (\bfx_1, \bfx_2, \cdots, \bfx_n) \in \bbR^{m \times n}$ and $\bfx_{j} := \\ (x_{1,j}, \cdots, x_{m,j})^T \in  \bbR^{m},\ 1\leq j\leq n$. Then $\vec(x) \in \bbR^{mn}$ is defined by
			\begin{equation*}
				\vec(\bfx):=
					\begin{bmatrix}
						\bfx_1 \\
						\bfx_2 \\
						\vdots \\
						\bfx_n
					\end{bmatrix}.
			\end{equation*}
		\end{definition}

		We shall assume $f+1 \leq d$ since the size of the filter mask is much less than the size of data in real applications. Under this assumption, the matrix $\bfT(\bfw)$ can be expressed as
		\begin{equation} \label{eq:T(w) matrix}
			\bfT(\bfw):=
				\begin{bmatrix}
					w_f & w_{f-1} & \cdots & w_0 & 0 & \cdots & 0 \\
					w_{f+1} & \ddots & \ddots & \ddots & \ddots & \ddots & \vdots \\
					\vdots & \ddots & \ddots & \ddots & \ddots & \ddots & \vdots \\
					w_{2f} & \ddots & \ddots & \ddots & \ddots & \ddots & w_{0} \\
					0 & \ddots & \ddots & \ddots & \ddots & \ddots & \vdots \\
					\vdots & \ddots & \ddots & \ddots & \ddots & \ddots & \vdots \\
					0 & \cdots & \cdots & w_{2f} & \cdots & \cdots & w_{f} \\	
				\end{bmatrix}\in\bbR^{d\times d}.
		\end{equation}
		Note that the entries of $\bfT(\bfw)$ along each of the diagonal and sub-diagonals are constant. 
		
	\subsection{Convolution of Matrices with Zero Padding in Single Channel}
	
		We shall further formulate convolution of matrices with zero padding in single channel.
		
		Assume $\bfw := (\bfw_{0}^T, \bfw_{1}^T, \cdots, \bfw_{2f}^T)^T \in \bbR^{(2f+1)\times(2f+1)}$, where $\bfw_{i} := (w_{i,0}, w_{i,1}, \cdots, w_{i, 2f}) \in \bbR^{2f+1},\ 0\leq i\leq 2f+1$, and $\bfx := (\bfx_1^T, \bfx_2^T, \cdots, \bfx_d^T)^T \in \bbR^{d \times d}$, where $\bfx_{i} := (x_{i,1}, x_{i,2}, \cdots, x_{i,d}) \in \bbR^{d}, 1\leq i\leq d$. Then the convolution $\bfx \ast \bfw$ with zero padding outputs a matrix $\bfy := (\bfy_1^T, \bfy_2^T, \cdots, \bfy_d^T)^T \in \bbR^{d \times d}$ with $\bfy_{i} := (y_{i,1}, y_{i,2}, \cdots, y_{i,d}) \in \bbR^{d}, 1\leq i\leq d$, defined by
		\begin{equation*} \label{eq:convolution 2-d single channel}
			y_{i,j} = \sum_{k_1=\max(1-i, -f)}^{\min(d-i, f)}\sum_{k_2=\max(1-j, -f)}^{\min(d-j, f)}x_{i+k_1, j+k_2}w_{f-k_1, f-k_2},\ 1\leq i,j\leq d.
		\end{equation*}
		Notice that $\vec(\bfx^T)=(\bfx_1, \bfx_2, \cdots, \bfx_d)^T \in\bbR^{d^{2}}$ and $\vec(\bfy^T)=(\bfy_1, \bfy_2, \cdots, \bfy_d)^T\in\bbR^{d^{2}}$ are column vectors and thus we could rewrite $\bfx \ast \bfw$ in a matrix-vector multiplication form by
		\begin{equation} \label{eq:T(w) 2-d single channel}
			\begin{aligned}
				\vec(\bfy^T)
				&= \vec((\bfx \ast \bfw)^T) \\
				&= \bfT(\bfw)\vec(\bfx^T) \\
				&=
				\begin{bmatrix}
					\bfT(\bfw_f) & \bfT(\bfw_{f-1}) & \cdots & \bfT(\bfw_0) & 0 & \cdots & 0 \\
					\bfT(\bfw_{f+1}) & \ddots & \ddots & \ddots & \ddots & \ddots & \vdots \\
					\vdots & \ddots & \ddots & \ddots & \ddots & \ddots & \vdots \\
					\bfT(\bfw_{2f}) & \ddots & \ddots & \ddots & \ddots & \ddots & \bfT(\bfw_{0}) \\
					0 & \ddots & \ddots & \ddots & \ddots & \ddots & \vdots \\
					\vdots & \ddots & \ddots & \ddots & \ddots & \ddots & \vdots \\
					0 & \cdots & \cdots & \bfT(\bfw_{2f}) & \cdots & \cdots & \bfT(\bfw_{f}) \\	
				\end{bmatrix}
				\vec(\bfx^T).
			\end{aligned}
		\end{equation}
		Here, the matrix $\bfT(\bfw)\in\bbR^{d^2\times d^2}$ is a Toeplitz type matrix with block Toeplitz structure and $\bfT(\bfw_{i})\in\bbR^{d\times d}$ are defined as in (\ref{eq:matrix form of convolution 1-d}).
		
	\subsection{Convolution of Matrices with Zero Padding in Multiple Channels}
		
		With the above preparations, we could now give the matrix-vector multiplication form of covolution with zero padding for matrices in multiple channels.
		
Let $c_{in}$ and $c_{out}$ denote the number of input and output channels, respectively. Assume $\bfx:=(\bfx_1, \cdots, \bfx_{c_{in}})\in\bbR^{d\times d\times c_{in}}$ with each $\bfx_i\in\bR^{d\times d}$, $\bfw:=(\bfw_{1}, \cdots, \bfw_{c_{out}})$ where $\bfw_{i}:=(\bfw_{i,1}, \bfw_{i,2}, \cdots,\bfw_{i,c_{in}})\in\bbR^{(2f+1)\times(2f+1)\times c_{in}},\ 1\leq i\leq c_{out}$ with each $\bfw_{i,j}\in\bbR^{(2f+1)\times(2f+1)}$, and $\bfy:=(\bfy_1, \bfy_2, \cdots, \bfy_{c_{out}})\in\bbR^{d\times d\times c_{out}}$ with each $\bfy_i\in\bR^{d\times d}$. For convolution of matrices in multiple channels, we define
		\begin{equation*} \label{eq:convolution 2-d multiple channel}
			\vec(\bfy_{i}^T)=\sum_{j=1}^{c_{in}}\bfT(\bfw_{i,j})\vec(\bfx_{j}^T),\ 1\leq i\leq c_{out}
		\end{equation*}
		and
		\begin{equation*}
			\begin{aligned}
				\vec(\bfx)&:=(\vec(\bfx_1^T)^T, \vec(\bfx_2^T)^T, \cdots, \vec(\bfx_{c_{in}}^T)^T)^T\in\bbR^{d^2c_{in}},\\
				\vec(\bfy)&:=(\vec(\bfy_1^T)^T, \vec(\bfy_2^T)^T, \cdots, \vec(\bfy_{c_{out}}^T)^T)^T\in\bbR^{d^2c_{out}}.
			\end{aligned}
		\end{equation*}
		Then the matrix-vector multiplication form of convolution of matrices with zero padding in multiple channels is given as follows:
		\begin{equation} \label{eq:T(w) 2-d multiple channels}
			\begin{aligned}
				\vec(\bfy)&=\bfW\vec(\bfx),\\
				\mbox{where }\bfW&=\bfT(\bfw)=
				\begin{bmatrix}
					\bfT(\bfw_{1,1}) & \bfT(\bfw_{1,2}) & \cdots & \bfT(\bfw_{1,c_{in}}) \\
					\bfT(\bfw_{2,1}) & \bfT(\bfw_{2,2}) & \cdots & \bfT(\bfw_{2,c_{in}}) \\
					\vdots & \vdots & \ddots & \vdots \\
					\bfT(\bfw_{c_{out},1}) & \bfT(\bfw_{c_{out},2}) & \cdots & \bfT(\bfw_{c_{out},c_{in}})
				\end{bmatrix}.
			\end{aligned}
		\end{equation}
		Notice that $\bfW\in\bbR^{d^{2}c_{out}\times d^{2}c_{in}}$  is a block matrix and every block in $\bfW$ is defined by (\ref{eq:T(w) 2-d single channel}).
		
		In conclusion, we have formulated a matrix form for convolution of matrices with zero padding in multi-channels in ResNets. It will be helpful for us to derive an explicit expression for the function determined by a ResNet.

\section{Deep Residual Networks} \label{sec:drn}
\setcounter{equation}{0}
	
	In order to study the convergence of ResNets, we consider in this section a matrix form of ResNets. 
Let us start with the pure convolution structure with shortcut connection of a ReLU neural network. Let $\sigma$ denote the {\bf ReLU} activation function
	\begin{equation*} \label{eq:relu definition}
		\sigma(x):=\max(x,0),\ x\in\bbR.
	\end{equation*}
	A ResNet as proposed in \cite{KaimingHe} with $n$ building blocks may be illustrated as follows:
	\begin{equation} \label{eq:structure of ResNet}
		\begin{aligned}
			\bfx \in [0,1]^{d\times d\times c_{in}}
			& \xrightarrow[\sigma]{\bfw_{s}, \bfb_{s}} \bfx^{(0)}
			& \xrightarrow{\cR^{(1)}_{\bfc^{(1)},\bff^{(1)}, q_{1}}} \bfx^{(1)} \xrightarrow{} \cdots \xrightarrow{} \xrightarrow{\cR^{(n)}_{\bfc^{(n)},\bff^{(n)}, q_{n}}} \bfx^{(n)}
			& \xrightarrow[\bfW_{o},\bfb_{o}]{\ap} \bfy \in \bbR^{c_{out}}.
			\\
			\mbox{input}\quad
			& \quad\mbox{sampling}
			& \mbox{residual blocks}\bqquad
			& \quad\mbox{output}
		\end{aligned}
	\end{equation}

We shall explain the above structure in details. In the sampling layer, it holds
\begin{equation} \label{samplinglayerresnt}
			\bfx^{(0)}=\sigma(\bfx \ast \bfw_{s} + \bfb_{s}),
	\end{equation}
where $\bfw_s$ and $\bfb_s$ denote the filter mask and bias vector of the sampling layer, respectively. And in the output layer, it holds
\begin{equation} \label{outputlayerresnt}
			\vec(\bfy)=\bfW_{o}\ap(\vec(\bfx^{(n)}))+\bfb_{o},
	\end{equation}
where $\bfW_o$ and $\bfb_o$ are the weight matrix and bias vector of the output layer, and $\ap$ denotes an global average pooling operation. It has been revealed in \cite{IdentityMatters} that max-pooling is not necessary in an all-convolutional residual network. Thus we shall only consider the global average pooling.

The most complicate structure lies in the residual blocks $\bfx^{(k-1)}\to \bfx^{(k)}$, $1\le k\le n$. It is represented by a nonlinear operator
\begin{equation} \label{eq:convolutional resnet}
\bfx^{(k)}=\cR^{(k)}_{\bfc^{(k)},\bff^{(k)}, q_{k}}(\bfx^{(k-1)}),
\end{equation}
where $\bfc^{(k)}:=(c^{(k)}_{0}, \cdots, c^{(k)}_{q_{k}})\in\bbN_{+}^{q_{k}+1}$, $\bff^{(k)}:=(f^{(k)}_{1}, \cdots, f^{(k)}_{q_{k}})\in\bbN_{+}^{q_{k}}$, and $q_{k}\in\bN$ denote the numbers of channels at each layer, the sizes of filter masks, and the depth of $k$-th residual block, respectively. Note that in most applications, $c_{in}\leq c^{(k)}_{m}$ for all $1\leq k\leq n,\ 1\leq m\leq q_{n}$. We shall make this assumption throughout the paper. Also, $c^{(k)}_{q_{k}}=c^{(k)}_{0}$ so that the dimensions are match in vector additions. The nonlinear operator $\cR^{(k)}_{\bfc^{(k)},\bff^{(k)}, q_{k}}$ may be illustrated by
	\begin{equation} \label{eq:structure of k-th residual block}
		\begin{aligned}
			\bfx^{(k-1)}=\bfx^{(k-1)}_{0} 
			& \xrightarrow[\sigma]{\bfw^{(k)}_{1}, \bfb^{(k)}_{1}} \bfx^{(k-1)}_{1}
			& \xrightarrow[\sigma]{\bfw^{(k)}_{2}, \bfb^{(k)}_{2}} \bfx^{(k-1)}_{2}
			& \xrightarrow{} \cdots \xrightarrow{}
			& \xrightarrow[\sigma]{\bfw^{(k)}_{q_k}, \bfb^{(k)}_{q_k}, +\bfx^{(k-1)}} \bfx^{(k-1)}_{q_{k}}=\bfx^{(k)}
			\\
			\mbox{input}\qquad
			& \qquad\mbox{1st layer}
			& \mbox{2nd layer}\qquad
			&
			& \mbox{$q_{k}$-th layer}\bqquad
		\end{aligned}
	\end{equation}
and mathematically written as
\begin{equation} \label{eq:convolutional residual part}
		\begin{aligned}
			\bfx^{(k-1)}_{i}&:=\sigma(\bfx^{(k-1)}_{i-1} \ast \bfw^{(k)}_{i} + \bfb^{(k)}_{i})\in\bbR^{d\times d\times c^{(k)}_{i}},\ 1\leq i\leq q_{k}-1,\\
			\bfx^{(k)}=\bfx^{(k-1)}_{q_{k}}&:=\sigma(\bfx^{(k-1)}_{q_{k}-1} \ast \bfw^{(k)}_{q_{k}} + \bfx^{(k-1)} + \bfb^{(k)}_{i})\in\bbR^{d\times d\times c^{(k)}_{q_{k}}}.
		\end{aligned}
	\end{equation}
In the above, $\bfw^{(k)}_{i}:=(\bfw^{(k)}_{i,1},\bfw^{(k)}_{i,2},\cdots,\bfw^{(k)}_{i,c^{(k)}_{i}})\in\bbR^{(2f^{(k)}_{i}+1)\times(2f^{(k)}_{i}+1)\times c^{(k)}_{i}}$ and $\bfb^{(k)}_{i}:=(b^{(k)}_{i,1}\bfE_{d}, \cdots, \\b^{(k)}_{i,c^{(k)}_{i}}\bfE_{d})$, where $\bfE_{d}$ denotes the $d\times d$ all-ones matrix and $b^{(k)}_{i,j}\in\bbR$ ($1\le j\le c^{(k)}_{i}$), are the filter mask and the bias vector at the $i$-th layer of the $k$-th residual block, respectively.
	
	By the matrix languages introduced in Section \ref{sec:conv_with_zero_padding}, we can express all the convolution operations above as matrix-vector multiplications. Let $\tbfx:=\vec(\bfx)\in\bbR^{d^{2}c_{in}}$, $\tbfx^{(k)}:=\vec(\bfx^{(k)})\in\bbR^{d^{2}c^{k}_{0}}$, and $\tbfy:=\vec(\bfy)=\bfy\in\bbR^{d^{2}c_{out}}$. Equations (\ref{samplinglayerresnt})-(\ref{eq:convolutional resnet}), and (\ref{eq:convolutional resnet}) can then be rewritten as
\begin{equation} \label{eq:matrix resnet}
		\begin{aligned}
			\bfW_{s}&:=\bfT(\bfw_{s}),	\quad		\tbfb_{s}:=\vec(\bfb_{s}),\\
			\tbfx^{(0)}&:=\sigma(\bfW_{s}\tbfx + \tbfb_{s}),\quad			
			\tbfy:=\bfW_{fc}\ap(\tbfx^{(n)})+\bfb_{fc},\\
\tbfx^{(k)}&:=\cN^{(k)}_{\bfc^{(k)}, \bff^{(k)}, q_{k}}(\tbfx^{(k-1)}),\ 1\leq k\leq n,\\
		\end{aligned}
	\end{equation}
and
	\begin{equation} \label{eq:matrix residual part}
		\begin{aligned}
			\bfW^{(k)}_{i}&:=\bfT(\bfw^{(k)}_{i}),\quad
			\tbfb^{(k)}_{i}:=\vec(\bfb^{(k)}_{i}),\\
			\tbfx^{(k-1)}_{i}&:=\sigma(\bfW^{(k)}_{i}\tbfx^{(k-1)}_{i-1} + \tbfb^{(k)}_{i})\in\bbR^{d^{2}c^{(k)}_{i}},\ 1\leq i\leq q_{k}-1,\\
			\tbfx^{(k)}&=\cN^{(k)}_{\bfc^{(k)},\bff^{(k)}, q_{k}}(\tbfx^{(k-1)})=\tbfx^{(k-1)}_{q_{k}}=\sigma(\bfW^{(k)}_{q_{k}}\tbfx^{(k-1)}_{q_{k}-1} + \tbfx^{(k-1)} + \tbfb^{(k)}_{i})\in\bbR^{d^{2}c^{(k)}_{q_{k}}}.
		\end{aligned}
	\end{equation}

	With the above matrix forms, conditions on the filter masks and bias vectors that ensure the convergence of network (\ref{eq:structure of ResNet}) as the number of residual blocks $n$ tends to infinity, can then be reformulated as conditions on the weight matrices and bias vectors. Convergence of deep ReLU neural networks and deep ReLU CNNs was recently studied in \cite{ConvergenceDeepReluNet,ConvergenceDeepCNN}. The matrix form for ResNets differs from those for DNNs and CNNs considered there in that the ResNets have shortcut connections while the DNNs in \cite{ConvergenceDeepReluNet} or CNNs in \cite{ConvergenceDeepCNN} do not. Also, the ResNets we consider have multiple-channels while the networks in \cite{ConvergenceDeepReluNet,ConvergenceDeepCNN} both have a single channel. These differences cause major difficulties that will be clear as we proceed with the analysis in the next sections. As a consequences, the results in \cite{ConvergenceDeepReluNet,ConvergenceDeepCNN} could not be directly applied to the current setting.
\section{Deep Neural Networks with Shortcut connections} \label{sec:dnn_with_sc}
\setcounter{equation}{0}

Since a ResNet (\ref{eq:structure of ResNet}) is a special deep ReLU neural network with shortcut connections, we shall first study convergence of general fully connected feed-forward neural networks with shortcut connections. The result obtained will then be applied to establish the convergence of ResNets. Toward this purpose, we shall describe general fully connected neural networks with shortcut connections and formulate their convergence as convergence of infinite products of non-square matrices.
	
Let $d_{in},d_{out},d_{res}$ be the dimension of the input space, output space, and the domain of residual blocks, respectively. We shall always assume $d_{in}\leq d_{res}$ as this is the case in real applications. The weight matrix $\bfW_{s}$ and the bias vector $\bfb_{s}$ of the sampling layer satisfy $\bfW_{s}\in\bbR^{d_{res}\times d_{in}}$ and $\bfb_{s}\in\bbR^{d_{res}}$. For each $1\leq k\leq n$, let $\bfW^{(k)}:=(\bfW^{(k)}_{1},\cdots, \bfW^{(k)}_{q_{k}})$ and $\bfb^{(k)}:=(\bfb^{(k)}_{1},\cdots,\bfb^{(k)}_{q_{k}})$ denote the weight matrices and bias vectors in the $k$-th residual layer, and $\bfc^{(k)}=(c^{(k)}_{0},\cdots,c^{(k)}_{q_{k}})\in\bbN^{q_{k}+1}_{+}$ denotes the dimensions of outputs in the $k$-th residual layer. That is, $\bfb^{(k)}_{i}\in\bbR^{c^{(k)}_{i}}$ for $1\leq i\leq q_{k},\ 1\leq k\leq n$, and $\bfW^{(k)}_{i}\in\bbR^{c^{(k)}_{i}\times c^{(k)}_{i-1}}$ for $1\leq i\leq q_{k},\ 1\leq k\leq n$. Note that $c^{(k)}_{0}=c^{(k)}_{q_{k}}=d_{res}$ for $1\leq i\leq q_{k},\ 1\leq k\leq n$. The weight matrix $\bfW_{o}$ and the bias vector $\bfb_{o}$ of the output layer satisfy $\bfW_{o}\in\bbR^{d_{out}\times d_{res}}$ and $\bfb_{o}\in\bbR^{d_{out}}$. And $\cN^{(k)}_{\bfc^{(k)}, q_{k}}$  denotes the nonlinear operator determined by the $k$-th residual blocks,  $1\leq k\leq n$.

	The structure of such a deep neural network with shortcut connections and the ReLU activation function $\sigma$ is illustrated as follows:
	\begin{equation} \label{eq:structure of DNN with sc}
		\begin{aligned}
			\bfx \in [0,1]^{d_{in}}
			& \xrightarrow[\sigma]{\bfW_{s}, \bfb_{s}} \bfx^{(0)}
			& \xrightarrow{\cN^{(1)}_{\bfc^{(1)}, q_{1}}} \bfx^{(1)} \xrightarrow{} \cdots \xrightarrow{} \xrightarrow{\cN^{(n)}_{\bfc^{(n)}, q_{n}}} \bfx^{(n)}
			& \xrightarrow{\bfW_{o}, \bfb_{o}} \bfy \in \bbR^{d_{out}}.
			\\
			\mbox{input}\quad
			& \quad\mbox{sampling}
			& \mbox{residual blocks}\bqquad
			& \quad\mbox{output}
		\end{aligned}
	\end{equation}
where the $k$th residual block is illustrated by
	\begin{equation} \label{eq:structure of residual block in DNN}
		\begin{aligned}
			\bfx^{(k-1)}\in\bbR^{d_{res}}
			& \xrightarrow[\sigma]{\bfW^{(k)}_{1}, \bfb^{(k)}_{1}} \bfx^{(k-1)}_{1}
			& \xrightarrow{} \cdots \xrightarrow{}
			& \xrightarrow[\sigma]{\bfW^{(k)}_{q_k}, \bfb^{(k)}_{q_k}, +\bfx^{(k-1)}} \bfx^{(k)}\in\bbR^{d_{res}},\ 1\leq k\leq n.
			\\
			\mbox{input}\qquad
			& \qquad\mbox{1st layer}
			&
			& \bqquad\mbox{$q_{k}$-th layer}
		\end{aligned}
	\end{equation}
	Here
	\begin{equation} \label{eq:resnet formulate}
			\bfx^{(0)}:=\sigma(\bfW_{s}\bfx+\bfb_{s}),\ \bfy:=\bfW_{o}\bfx^{(n)}+\bfb_{o},\quad
			\bfx^{(k)}:=\cN^{(k)}_{\bfc^{(k)}, q_{k}}(\bfx^{(k-1)}),\ 1\leq k\leq n,\\
	\end{equation}
	and $\cN^{(k)}_{\bfc^{(k)}, q_{k}}$ is given by
	\begin{equation} \label{eq:residual formulate}
		\begin{aligned}
			\bfx^{(k-1)}_{0}&:=\bfx^{(k-1)},\
			\bfx^{(k-1)}_{i}:=\sigma(\bfW^{(k)}_{i}x^{(k-1)}_{i-1}+\bfb^{(k)}_{i}),\ 1\leq i\leq q_{k}-1,\\
			\bfx^{(k)}=\cN^{(k)}_{\bfc^{(k)}, q_{k}}(\bfx^{(k-1)})&:=\sigma(\bfW^{(k)}_{q_{k}}\bfx^{(k-1)}_{q_{k}-1}+\bfx^{(k-1)}+\bfb^{(k)}_{q_{k}-1}).
		\end{aligned}	
	\end{equation}
Note that the sampling layer could be treated as a single layer residual block with the weight matrix $\bfW^{(0)}_{1}:=\begin{bmatrix}
		\bfW_{s}& {\bf 0}
	\end{bmatrix}
	-\bfI_{d_{res}}
	\in\bbR^{d_{res}\times d_{res}}$, where $\bfI_{d_{res}}$ denotes the $d_{res}\times d_{res}$ identity matrix and bias vector $\bfb^{(0)}_{1}:=\bfb_{s}\in\bbR^{d_{res}}$. For convenience, we may adjust the input $\bfx$ to $\bfx^{(-1)}:=\begin{bmatrix}
		\bfx\\
		{\bf 0}
	\end{bmatrix}\in\bbR^{d_{res}}$, and rewrite equations (\ref{eq:resnet formulate}) and (\ref{eq:residual formulate}) as:
	\begin{equation} \label{eq:resnet formulate adjust}
			\bfx^{(k)}=\cN^{(k)}_{\bfc^{(k)}, q_{k}}(\bfx^{(k-1)}),\ 0\leq k\leq n
	\end{equation}
	and
	\begin{equation} \label{eq:residual formulate adjust}
		\begin{aligned}
			\bfx^{(-1)}_{0}&:=\bfx^{(-1)},
			\bfx^{(k-1)}_{0}:=\bfx^{(k-1)},\\
			\bfx^{(k-1)}_{i}&:=\sigma(\bfW^{(k)}_{i}x^{(k-1)}_{i-1}+\bfb^{(k)}_{i}),\ 1\leq i\leq q_{k}-1,\\
			\bfx^{(k)}&:=\sigma(\bfW^{(k)}_{q_{k}}\bfx^{(k-1)}_{q_{k}-1}+\bfx^{(k-1)}+\bfb^{(k)}_{q_{k}})
		\end{aligned}
		\qquad 0\leq k\leq n,
	\end{equation}
	where $q_{0}=1$, $\bfc^{(0)}:=(c^{(0)}_{0}, c^{(0)}_{1})$ with $c^{(0)}_{0}=d_{in}, c^{(0)}_{1}=d_{res}$, $\bfW^{(0)}:=(\bfW^{(0)}_{1})$, and $\bfb^{(0)}:=(\bfb^{(0)}_{1})$.

To formulate an expression for the continuous function determined by the deep neural network (\ref{eq:structure of DNN with sc}) with shortcut connections, we recall the compact notation for consecutive compositions of functions which was used in \cite{ConvergenceDeepReluNet}.

	\begin{definition} {\bf (Consecutive composition)}
		Let $f_1$, $f_2$, $\cdots$, $f_n$ be a finite sequence of functions such that the range of $f_i$ is contained in the domain of $f_{i+1}$, $1\leq i\leq n-1$, the consecutive composition of $\{f_{i}\}_{i=1}^{n}$ is defined to be a function
		\begin{equation*}
			\bigodot_{i=1}^{n}f_{i}:=f_{n}\circ f_{n-1}\circ \cdots \circ f_{2} \circ f_{1},
		\end{equation*}
		whose domain is that of $f_1$. For convenience, we also make the convention $(\bigodot_{i=1}^{n}f_{i})(\bfx)=\bfx$ if $n<1$.
	\end{definition}
	
	Using the above notation, equation (\ref{eq:resnet formulate adjust}) and (\ref{eq:residual formulate adjust}) may be rewritten as
	\begin{equation} \label{eq:resnet formulate comp}
			\bfx^{(k)}=\left(\bigodot_{m=0}^{k}\cN^{(m)}_{\bfc^{(m)}, q_{m}}\right)(\bfx^{(-1)}),\ 0\leq k\leq n,
	\end{equation}
	and
	\begin{equation} \label{eq:residual formulate comp}
		\begin{aligned}
			\bfx^{(k-1)}_{i}&=\left(\bigodot_{m'=1}^{i}\sigma(\bfW^{(k)}_{m'}\cdot+\bfb^{(k)}_{m'})\right)(\bfx^{(k-1)}_{0}),\ 0\leq i\leq q_{k}-1,\\
			\bfx^{(k)}&=\sigma(\bfW^{(k)}_{q_{k}}\bfx^{(k-1)}_{q_{k}-1}+\bfx^{(k-1)}+\bfb^{(k)}_{q_{k}-1}),
		\end{aligned}
		\qquad 0\leq k\leq n.
	\end{equation}
And we have
	\begin{equation} \label{eq:residual block formulate}
		\cN^{(k)}_{\bfc^{(k)},q_k}(\bfx^{(k-1)}):=\left(\sigma\left(\bfW^{(k)}_{q_k}\left(\bigodot_{i=1}^{q_{k}}\sigma(\bfW^{(k)}_{i}\cdot+\bfb^{(k)}_{i})\right)+\cdot+\bfb^{(k)}_{q_{k}}\right)\right)(\bfx^{(k-1)}),\ 0\leq k\leq n.
	\end{equation}
	We are concerned with the convergence of the above functions determined by the deep neural network as $n$ tends to infinity. Since the output layer is a linear function of $x^{(n)}$ and it will not affect the convergence of the network. We are hence concerned with the convergence of the deep neural work defined by
	\begin{equation} \label{eq:convergence part D-resnet}
		\cN_{\bfc, \bfq, n}(\bfx):=\left(\bigodot_{k=0}^{n}\cN^{(k)}_{\bfc^{(k)}, q_{k}}\right)(\bfx),\ \bfx\in [0,1]^{d_{res}}
	\end{equation}
	as $n$ tends to infinity. Note that here $\bfc:=(\bfc^{(0)},\bfc^{(1)},\cdots,\bfc^{(n)})$, $\bfq:=(q_{0},q_{1},\cdots,q_{n})$ and $\cN_{\bfc, \bfq, n}$ is a function from $[0,1]^{d_{res}}$ to $\bbR^{d_{res}}$.

	We introduce an algebraic formulation of a deep ReLU network with shortcut connections by adapting the notions of activation domains and activation matrices introduced in \cite{ConvergenceDeepReluNet}. For each $m\in\bbN_{+}$, we define the set of {\bf activation matrices} by
	\begin{equation*}
		\begin{aligned}
			\cD_{m}&:=\{\diag(a_{1}, \cdots, a_{m}):a_{i}\in\{0,1\},1\leq i\leq m\}.\\
		\end{aligned}
	\end{equation*}
	The support of an activation matrix $J\in\cD_{m}$ is defined by
	\begin{equation*}
		\supp J:=\{k:\ J_{k,k}=1,\ 1\leq k\leq m\}.
	\end{equation*}
	
	\begin{definition} {\bf (Activation domains of one layer network)} \label{def:Activation domains of one layer network}
		For a weight matrix $\bfW\in\bbR^{d\times d^{'}}$ and a bias vector $\bfb\in\bbR^{d}$, the activation domain of $\sigma(\bfW\bfx+\bfb)$ with respect to a diagonal matrix $J\in\cD_{m}$ is
		\begin{equation*}
			D_{J, \bfW, \bfb}:=\{x\in\bbR^{d^{'}}:(\bfW\bfx+\bfb)_{j}>0,\ for\ j\in\supp J\ and\ (\bfW\bfx+\bfb)_{j}\leq 0\ for\ j\notin\supp J\}.
		\end{equation*}
	\end{definition}

	
We need to extend the definition of activation domains of neural network with one and multiple residual blocks.
	\begin{definition} {\bf (Activation domains of one-residual-block network)} \label{def:Activation domains of residual block}
		For
		\begin{equation*}
			\bfW^{(0)}:=(\bfW^{(0)}_{1}, \cdots, \bfW^{(0)}_{q_{0}})\in\prod_{m=1}^{q_{0}}\bbR^{c^{(0)}_{m}\times c^{(0)}_{m-1}},\quad and\quad \bfb^{(0)}:=(\bfb^{(0)}_{1}, \cdots, \bfb^{(0)}_{q_{n}})\in\prod_{m=1}^{q_{0}}\bbR^{c^{(0)}_{m}},
		\end{equation*}
		the activation domain of
		\begin{equation*}
			\cN^{(0)}_{\bfc^{(0)},q_0}=\sigma(\bfW^{(0)}_{q_{0}}(\bigodot_{m=1}^{q_{0}-1}\sigma(\bfW^{(0)}_{m}\cdot+\bfb^{(0)}_{m}))+\cdot+\bfb^{(0)}_{q_{0}})
		\end{equation*}
		with respect to $\bfJ^{(0)}:=(J^{(0)}_{1},\cdots,J^{(0)}_{q_{0}})\in\prod_{m=1}^{q_{0}}\cD_{c^{(0)}_{m}}$ is defined recursively by
		\begin{equation*}
			D^{(0)}_{\barbfJ^{(0)}_{1}, \barbfW^{(0)}_{1}, \barbfb^{(0)}_{1}}=D_{J^{(0)}_{1},\bfW^{(0)}_{1},\bfb^{(0)}_{1}}\cap [0,1]^{d_{res}}
		\end{equation*}
		and for $2\leq m\leq q_{0} - 1$
		\begin{equation*}
			\begin{aligned}
				D^{(0)}_{\barbfJ^{(0)}_{m}, \barbfW^{(0)}_{m}, \barbfb^{(0)}_{m}}
				&=
				\left\{\bfx\in D^{(0)}_{\barbfJ^{(0)}_{m-1}, \barbfW^{(0)}_{m-1}, \barbfb^{(0)}_{m-1}}
				:(\bigodot_{m'=1}^{m-1}\sigma(\bfW^{(0)}_{m'}\cdot+\bfb^{(0)}_{m'}))(\bfx)\in D_{J^{(0)}_{m},\bfW^{(0)}_{m},\bfb^{(0)}_{m}}\right\},\\
				D^{(0)}_{\barbfJ^{(0)}_{q_{0}}, \barbfW^{(0)}_{q_{0}}, \barbfb^{(0)}_{q_{0}}}
				&=
				\left\{\bfx\in D^{(0)}_{\barbfJ^{(0)}_{q_{0}-1}, \barbfW^{(0)}_{q_{0}-1}, \barbfb^{(0)}_{q_{0}-1}}:(\bfW^{(0)}_{q_{0}}(\bigodot_{m'=1}^{q_{0}-1}\sigma(\bfW^{(0)}_{m'}\cdot+\bfb^{(0)}_{m'}))+\cdot)(\bfx)\in D_{J^{(0)}_{q_{0}},\bfI_{res},\bfb^{(0)}_{q_{0}}}\right\}.
			\end{aligned}
		\end{equation*}
		Here,
		\begin{equation*}
				\barbfW^{(0)}_{m}:=(\bfW^{(0)}_{1}, \cdots, \bfW^{(0)}_{m}),\
				\barbfb^{(0)}_{m}:=(\bfb^{(0)}_{1}, \cdots, \bfb^{(0)}_{m}),\
				\barbfJ^{(0)}_{m}:=(J^{(0)}_{1}, \cdots, J^{(0)}_{m}),
			\qquad 1\leq m\leq q_{0}.
		\end{equation*}
		For convenience, we denote $D^{(0)}_{\bfJ^{(0)}, \bfW^{(0)}, \bfb^{(0)}}:=D^{(0)}_{\barbfJ^{(0)}_{q_{0}}, \barbfW^{(0)}_{q_{0}}, \barbfb^{(0)}_{q_{0}}}$.
	\end{definition}

	With the above definitions, we could now introduce the definition of the activation domain of a multi-residual-block network. This definition is a nontrivial extension of that in \cite{ConvergenceDeepReluNet}.

	\begin{definition} {\bf (Activation domains of a multi-residual-block network)}
		\label{def:Activation domains of multi-residual-block network}
		For
		\begin{equation*}
			\barbfW_{n}:=(\bfW^{(0)},\cdots,\bfW^{(n)})\in\prod_{k=0}^{n}\prod_{m=1}^{q_{k}}\bbR^{c^{(k)}_{m}\times c^{(k)}_{m-1}},\quad and\quad
			\barbfb_{n}:=(\bfb^{(0)},\cdots,\bfb^{(n)})\in\prod_{k=0}^{n}\prod_{m=1}^{q_{k}}\bbR^{c^{(k)}_{m}},
		\end{equation*}
		the activation domain of
		\begin{equation*}
			\cN_{\bfc, \bfq, n}=\bigodot_{k=0}^{n} \cN^{(k)}_{\bfc^{(k)}, q_{k}}
		\end{equation*}
		with respect to $\barbfJ_{n}:=(\bfJ^{(0)},\cdots,\bfJ^{(n)})\in\prod_{k=0}^{n}\prod_{m=1}^{q_{n}}\cD_{c^{(k)}_{m}}$ is defined recursively by
$$
D_{\barbfJ_{0},\barbfW_{0},\barbfb_{0}}=D^{(0)}_{\bfJ^{(0)},\bfW^{(0)},\bfb^{(0)}},\ D_{\barbfJ_{n},\barbfW_{n},\barbfb_{n}}
				=
				D^{(n)}_{\bfJ^{(n)},\bfW^{(n)},\bfb^{(n)}},
$$
where
		\begin{equation*}
			\begin{aligned}
				D^{(n)}_{\barbfJ^{(n)}_{1},\barbfW^{(n)}_{1},\barbfb^{(n)}_{1}}
				&=
				\left\{
				\bfx\in D_{\barbfJ_{n-1},\barbfW_{n-1},\barbfb_{n-1}}:
				(\bigodot_{k=0}^{n-1}\cN^{(k)}_{\bfc^{(k)},q_{k}})(\bfx)\in D_{J^{(n)}_{1},\bfW^{(n)}_{1},\bfb^{(n)}_{1}}
				\right\}
				\\
				D^{(n)}_{\barbfJ^{(n)}_{m},\barbfW^{(n)}_{m},\barbfb^{(n)}_{m}}
				&=
				\biggl\{
				\bfx\in D^{(n)}_{\barbfJ^{(n)}_{m-1},\barbfW^{(n)}_{m-1},\barbfb^{(n)}_{m-1}}:
				\\
				&\qquad
				((\bigodot_{m'=1}^{m-1}\sigma(\bfW^{(n)}_{m'}\cdot+\bfb^{(n)}_{m'}))\circ (\bigodot_{k=0}^{n-1}\cN^{(k)}_{\bfc^{(k)},q_{k}}))(\bfx)\in D_{J^{(n)}_{m},\bfW^{(n)}_{m},\bfb^{(n)}_{m}}
				\biggr\},\ 2\leq m\leq q_{n}-1		
				\\
				D^{(n)}_{\barbfJ^{(n)}_{q_{n}},\barbfW^{(n)}_{q_{n}},\barbfb^{(n)}_{q_{n}}}
				&=
				\biggl\{
				\bfx\in D^{(n)}_{\barbfJ^{(n)}_{q_{n}-1},\barbfW^{(n)}_{q_{n}-1},\barbfb^{(n)}_{q_{n}-1}}:
				\\
				&\qquad
				((\bfW^{(n)}_{q_{n}}(\bigodot_{m'=1}^{q_{n}-1}\sigma(\bfW^{(n)}_{m'}\cdot+\bfb^{(n)}_{m'}))+\cdot)\circ (\bigodot_{k=0}^{n-1}\cN^{(k)}_{\bfc^{(k)},q_{k}}))(\bfx)\in D_{J^{(n)}_{q_{n}},\bfI_{res},\bfb^{(n)}_{q_{n}}}.
				\biggr\}
			\end{aligned}
		\end{equation*}
	\end{definition}

	For each non-negative number $n$, the activation domains
	\begin{equation*}
		D_{\barbfJ_{n},\barbfW_{n},\barbfb_{n}},\quad for\quad \barbfJ_{n}:=(\bfJ^{(0)},\cdots,\bfJ^{(n)})\in\prod_{k=0}^{n}\prod_{m=1}^{q_{n}}\cD_{c^{(k)}_{m}},
	\end{equation*}
	form a partition of the unit cube $[0,1]^{d_{res}}$. By using these activation domains, we are able to write down an explicit expression of the ReLU network with shortcut connections $\cN_{\bfc, \bfq, n}$ with applications of the ReLU activation function replaced by multiplications with the activation matrices. To this end, we write
	\begin{equation*}
		\prod_{i=0}^{n}\bfW_i=\bfW_{n}\bfW_{n-1}\cdots\bfW_{0}.
	\end{equation*}
	For $n,k\in\bbN$, we also adopt the following convention that
	\begin{equation*}
		\prod_{i=k}^{n}\bfW_i=\bfW_{n}\bfW_{n-1}\cdots\bfW_{k},\ {\rm for}\ n\geq k,\ {\rm and}\ \prod_{i=k}^{n}\bfW_{i}=\bfI_{d_{res}},\ {\rm for}\ n < k.
	\end{equation*}
	\begin{theorem} \label{thm:Matrix formulation of resnets}
		It holds for each $\bfx\in D_{\barbfJ_{n}, \barbfW_{n}, \barbfb_{n}}$, $\barbfJ_{n}:=(\bfJ^{(0)},\cdots,\bfJ^{(n)})\in\prod_{k=0}^{n}\prod_{m=1}^{q_{k}}\cD_{d_{res}}$ that
		\begin{equation} \label{eq:matrix formulation of resnets}
			\begin{aligned}
				\cN_{\bfc, \bfq, n}(\bfx)
				&=
				\left(\bigodot_{k=0}^{n}\cN^{(k)}_{\bfc^{(k)}, q_{k}}\right)(\bfx)=
				\left[\prod_{k=0}^{n}\left(\prod_{m=1}^{q_{k}}J^{(k)}_{m}\bfW^{(k)}_{m}+J^{(k)}_{q_{k}}\right)\right]\bfx
				\\
				&\quad+
				\sum_{k=0}^{n}\sum_{m=1}^{q_{k}}\left[\prod_{k'=k+1}^{n}\left(\prod_{m'=1}^{q_{k'}}J^{(k')}_{m'}\bfW^{(k')}_{m'}+J^{(k')}_{q_{k'}}\right)\right]
				\left(\prod_{m'=m+1}^{q_{k}}J^{(k)}_{m'}\bfW^{(k)}_{m'}\right)J^{(k)}_{m}\bfb^{(k)}_{m}.
			\end{aligned}	
		\end{equation}
	\end{theorem}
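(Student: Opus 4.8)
The plan is to prove the formula in two stages: first show that, on a single activation domain, each residual block acts as an affine map obtained by replacing every application of $\sigma$ with multiplication by the corresponding activation matrix; and then compose these affine maps over $k=0,1,\dots,n$ by induction.

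First I would fix $\bfx\in D_{\barbfJ_{n},\barbfW_{n},\barbfb_{n}}$ and observe that the defining conditions of the activation domains (Definitions \ref{def:Activation domains of residual block} and \ref{def:Activation domains of multi-residual-block network}) guarantee that at every intermediate layer the pre-activation vector has exactly the sign pattern prescribed by the corresponding $J^{(k)}_m$. Consequently the identity $\sigma(\bfW^{(k)}_m\bfz+\bfb^{(k)}_m)=J^{(k)}_m(\bfW^{(k)}_m\bfz+\bfb^{(k)}_m)$ holds exactly at each step: the diagonal entries of $J^{(k)}_m$ equal to $1$ mark coordinates where the argument is positive and $\sigma$ is the identity, while those equal to $0$ mark coordinates where the argument is nonpositive and $\sigma$ returns zero. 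This reduces each nonlinear layer to an affine one and is the point at which the recursive structure of the activation domains is used.

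Next I would expand a single residual block. Writing $\bfx^{(k-1)}_0=\bfx^{(k-1)}$, iterating $\bfx^{(k-1)}_i=J^{(k)}_i\bfW^{(k)}_i\bfx^{(k-1)}_{i-1}+J^{(k)}_i\bfb^{(k)}_i$ for $1\le i\le q_k-1$, then applying the shortcut layer $\bfx^{(k)}=J^{(k)}_{q_k}(\bfW^{(k)}_{q_k}\bfx^{(k-1)}_{q_k-1}+\bfx^{(k-1)}+\bfb^{(k)}_{q_k})$ and collecting terms, I would obtain
\begin{equation*}
\cN^{(k)}_{\bfc^{(k)},q_k}(\bfx^{(k-1)})=\left(\prod_{m=1}^{q_k}J^{(k)}_m\bfW^{(k)}_m+J^{(k)}_{q_k}\right)\bfx^{(k-1)}+\sum_{m=1}^{q_k}\left(\prod_{m'=m+1}^{q_k}J^{(k)}_{m'}\bfW^{(k)}_{m'}\right)J^{(k)}_m\bfb^{(k)}_m.
\end{equation*}
The extra summand $J^{(k)}_{q_k}$ in the linear coefficient is precisely the algebraic footprint of the shortcut connection $+\bfx^{(k-1)}$ passing through the final activation matrix; tracking this term correctly, together with the reversed-order product convention $\prod_{i=k}^{n}\bfW_i=\bfW_n\cdots\bfW_k$, is the part that demands the most care.

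Finally I would compose these affine maps. Setting $A^{(k)}:=\prod_{m=1}^{q_k}J^{(k)}_m\bfW^{(k)}_m+J^{(k)}_{q_k}$ and letting $\bfd^{(k)}$ denote the bias term above, each block is the affine map $\bfx\mapsto A^{(k)}\bfx+\bfd^{(k)}$, and I would prove by induction on $n$ the standard composition identity
\begin{equation*}
\left(\bigodot_{k=0}^{n}\cN^{(k)}_{\bfc^{(k)},q_k}\right)(\bfx)=\left(\prod_{k=0}^{n}A^{(k)}\right)\bfx+\sum_{k=0}^{n}\left(\prod_{k'=k+1}^{n}A^{(k')}\right)\bfd^{(k)},
\end{equation*}
where in the inductive step one peels off the outermost block $\cN^{(n)}$ and substitutes the inductive hypothesis for $\bigodot_{k=0}^{n-1}\cN^{(k)}$. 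Substituting the definitions of $A^{(k)}$ and $\bfd^{(k)}$ then yields (\ref{eq:matrix formulation of resnets}). I expect the main obstacle to be organizational rather than conceptual: the nested summation over blocks $k$ and intra-block layers $m$, combined with the reversed-order matrix products and the shortcut-induced $J^{(k)}_{q_k}$ terms, makes the index bookkeeping delicate, and one must verify that the activation-domain conditions license the affine replacement at the final shortcut layer of each block and not merely at the interior layers.
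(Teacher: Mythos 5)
Your proposal is correct and follows essentially the same route as the paper: the paper likewise establishes the single-block affine formula (its equation (\ref{eq:matrix form of one residual block}), obtained via the activation-matrix replacement, citing Theorem 3.4 of \cite{ConvergenceDeepReluNet} for the interior layers where you iterate it directly) and then inducts on $n$, peeling off the outermost block $\cN^{(n)}$ and substituting the inductive hypothesis exactly as you describe. Your packaging of the inductive step as a generic composition-of-affine-maps identity is only a cosmetic reorganization of the same argument.
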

	\begin{proof}
		We proof by induction on $n$. When $n=0$
		\begin{equation*}
				\cN_{\bfc, \bfq, 0}(\bfx)
				=\cN^{(0)}_{\bfc^{(0)}, q_{k}}(\bfx)
				=\left(\sigma\left(\bfW^{(0)}_{q_{0}}\left(\bigodot_{m=1}^{q_{0}-1}\sigma(\bfW^{(0)}_{m}\cdot+\bfb^{(0)}_{m})\right)+\cdot+\bfb^{(0)}_{q_{0}}\right)\right)(\bfx).
		\end{equation*}
		Let $\bfx\in D_{\barbfJ_{0}, \barbfW_{0}, \barbfb_{0}}$. By Theorem 3.4 in \cite{ConvergenceDeepReluNet},
		\begin{equation*}
			\left(\bigodot_{m=1}^{q_{0}-1}\sigma(\bfW^{(0)}_{m}\cdot+\bfb^{(0)}_{m})\right)(\bfx)=\prod_{m=1}^{q_{0}-1}J^{(0)}_{m}\bfW^{(0)}_{m}\bfx+\sum_{m=1}^{q_{0}-1}\left(\prod_{m'=m+1}^{q_{0}-1}J^{(0)}_{m'}\bfW^{(0)}_{m'}\right)J^{(0)}_{m}\bfb^{(0)}_{m}
		\end{equation*}
		and thus for $\bfx\in D_{\barbfJ_{0}, \barbfW_{0}, \barbfb_{0}}$,
		\begin{equation} \label{eq:matrix form of one residual block}
			\begin{aligned}
				\cN_{\bfc, \bfq, 0}(\bfx)
				&=
				J^{(0)}_{q_{0}}\left(\bfW^{(0)}_{q_{0}}\left(\prod_{m=1}^{q_{0}-1}J^{(0)}_{m}\bfW^{(0)}_{m}\bfx+\sum_{m=1}^{q_{0}-1}\left(\prod_{m'=m+1}^{q_{0}-1}J^{(0)}_{m'}\bfW^{(0)}_{m'}\right)J^{(0)}_{m}\bfb^{(0)}_{m}\right)+\bfx+\bfb^{(0)}_{q_{0}}\right)
				\\
				&=\left(\prod_{m=1}^{q_{0}}J^{(0)}_{m}\bfW^{(0)}_{m}+J^{(0)}_{q_{0}}\right)\bfx+\sum_{m=1}^{q_{0}}\left(\prod_{m'=m+1}^{q_{0}}J^{(0)}_{m'}\bfW^{(0)}_{m'}\right)J^{(0)}_{m}\bfb^{(0)}_{m}.
			\end{aligned}
		\end{equation}
		The result is hence true when $n=0$. Suppose that (\ref{eq:matrix formulation of resnets}) holds for $n-1$. Now Let $\bfx\in D_{\barbfJ_{n},\barbfW_{n},\barbfb_{n}}$. Then
		\begin{equation*}
			\begin{aligned}
				\cN_{\bfc, \bfq, n}(\bfx)
				&=\cN^{(n)}_{\bfc^{(n)},q_{n}}\left(\left(\bigodot_{k=0}^{n-1}\cN^{(k)}_{\bfc^{(k)},q_{k}}\right)(\bfx)\right)
				\\
				&=\cN^{(n)}_{\bfc^{(n)},q_{n}}
				\left(
				\left[\prod_{k=0}^{n-1}\left(\prod_{m=1}^{q_{k}}J^{(k)}_{m}\bfW^{(k)}_{m}+J^{(k)}_{q_{k}}\right)\right]\bfx
				\right.
				\\
				&
				\left.
				\qquad+
				\sum_{k=0}^{n-1}\sum_{m=1}^{q_{k}}\left[\prod_{k'=k+1}^{n-1}\left(\prod_{m'=1}^{q_{k'}}J^{(k')}_{m'}\bfW^{(k')}_{m'}+J^{(k')}_{q_{k'}}\right)\right]
				\left(\prod_{m'=m+1}^{q_{k}}J^{(k)}_{m'}\bfW^{(k)}_{m'}\right)J^{(k)}_{m}\bfb^{(k)}_{m}
				\right).
			\end{aligned}
		\end{equation*}
		By definition (\ref{def:Activation domains of multi-residual-block network}), we get by (\ref{eq:matrix form of one residual block}) and induction that  for $\bfx\in D_{\barbfJ_{n},\barbfW_{n},\barbfb_{n}}$,
		\begin{equation*}
			\begin{aligned}
				\cN_{\bfc, \bfq, n}(\bfx)
				&=\cN^{(n)}_{\bfc^{(n)},q_{n}}\left(\left(\bigodot_{k=0}^{n-1}\cN^{(k)}_{\bfc^{(k)},q_{k}}\right)(\bfx)\right)
				\\
				&=\left(\prod_{m=1}^{q_{n}}J^{(n)}_{m}\bfW^{(n)}_{m}+J^{(n)}_{q_{n}}\right)\left(\left(\bigodot_{k=0}^{n-1}\cN^{(k)}_{\bfc^{(k)},q_{k}}\right)(\bfx)\right)+\sum_{m=1}^{q_{n}}\left(\prod_{m'=m+1}^{q_{n}}J^{(n)}_{m'}\bfW^{(n)}_{m'}\right)J^{(n)}_{m}\bfb^{(n)}_{m}
				\\
				&=\left(\prod_{m=1}^{q_{n}}J^{(n)}_{m}\bfW^{(n)}_{m}+J^{(n)}_{q_{n}}\right)
				\left(
				\left[\prod_{k=0}^{n-1}\left(\prod_{m=1}^{q_{k}}J^{(k)}_{m}\bfW^{(k)}_{m}+J^{(k)}_{q_{k}}\right)\right]\bfx
				\right.\\
				&\left.\qquad+
				\sum_{k=0}^{n-1}\sum_{m=1}^{q_{k}}\left[\prod_{k'=k+1}^{n-1}\left(\prod_{m'=1}^{q_{k'}}J^{(k')}_{m'}\bfW^{(k')}_{m'}+J^{(k')}_{q_{k'}}\right)\right]
				\left(\prod_{m'=m+1}^{q_{k}}J^{(k)}_{m'}\bfW^{(k)}_{m'}\right) J^{(k)}_{m}\bfb^{(k)}_{m}
				\right)
				\\
				&\qquad+
				\sum_{m=1}^{q_{n}}\left(\prod_{m'=m+1}^{q_{n}}J^{(n)}_{m'}\bfW^{(n)}_{m'}\right)J^{(n)}_{m}\bfb^{(n)}_{m}
				\\
				&=
				\left[\prod_{k=0}^{n}\left(\prod_{m=1}^{q_{k}}J^{(k)}_{m}\bfW^{(k)}_{m}+J^{(k)}_{q_{k}}\right)\right]\bfx+
				\sum_{m=1}^{q_{n}}\left(\prod_{m'=m+1}^{q_{n}}J^{(n)}_{m'}\bfW^{(n)}_{m'}\right)J^{(n)}_{m}\bfb^{(n)}_{m}
				\\
				&\qquad+
				\sum_{k=0}^{n-1}\sum_{m=1}^{q_{k}}\left[\prod_{k'=k+1}^{n}\left(\prod_{m'=1}^{q_{k'}}J^{(k')}_{m'}\bfW^{(k')}_{m'}+J^{(k')}_{q_{k}}\right)\right]
				\left(\prod_{m'=m+1}^{q_{k}}J^{(k)}_{m'}\bfW^{(k)}_{m'}\right)J^{(k)}_{m}\bfb^{(k)}_{m}
				\\
				&=
				\left[\prod_{k=0}^{n}\left(\prod_{m=1}^{q_{k}}J^{(k)}_{m}\bfW^{(k)}_{m}+J^{(k)}_{q_{k}}\right)\right]\bfx
				\\
				&\qquad+
				\sum_{k=0}^{n}\sum_{m=1}^{q_{k}}\left[\prod_{k'=k+1}^{n}\left(\prod_{m'=1}^{q_{k'}}J^{(k')}_{m'}\bfW^{(k')}_{m'}+J^{(k')}_{q_{k'}}\right)\right]
				\left(\prod_{m'=m+1}^{q_{k}}J^{(k)}_{m'}\bfW^{(k)}_{m'}\right)J^{(k)}_{m}\bfb^{(k)}_{m}
			\end{aligned}
		\end{equation*}
		which proves (\ref{thm:Matrix formulation of resnets}).
	\end{proof}

	For convenience, we denote
	\begin{equation} \label{eq:An}
		\bfA_{n}:=\prod_{k=0}^{n}\left(\prod_{m=1}^{q_{k}}J^{(k)}_{m}\bfW^{(k)}_{m}+J^{(k)}_{q_{k}}\right)
	\end{equation}
	and
	\begin{equation} \label{eq:Bn}
		\bfB_{n}:=\sum_{k=0}^{n}\sum_{m=1}^{q_{k}}\left[\prod_{k'=k+1}^{n}\left(\prod_{m'=1}^{q_{k'}}J^{(k')}_{m'}\bfW^{(k')}_{m'}+J^{(k')}_{q_{k'}}\right)\right]
		\left(\prod_{m'=m+1}^{q_{k}}J^{(k)}_{m'}\bfW^{(k)}_{m'}\right)J^{(k)}_{m}\bfb^{(k)}_{m}.
	\end{equation}
	
	
	We reach the main result of this section, which is a direct consequence of the above theorem.
	\begin{theorem} \label{thm:convergence of resnet in matrix}
		Let $\bfq:=\{q_{n}\}_{n=1}^{\infty}\subseteq \bbN_{+}$, $\bfc:=\{\bfc^{(n)}\}_{n=0}^{\infty}$ with $\bfc^{(n)}:=(c^{(n)}_{0},\cdots,c^{(n)}_{q_{n}})\in\bbN_{+}^{q_{n}}$, $\bfW:=\{\bfW^{(n)}\}_{n=0}^{\infty}$ with $\bfW^{(n)}\in\prod_{m=1}^{q_{n}}\bbR^{c^{(n)}_{m}\times c^{(n)}_{m-1}}$ be the weight matrices, and $\bfb^{(n)}:=\{\bfb^{(n)}\}_{n=0}^{\infty}$ with $\bfb^{(n)}\in\prod_{m=1}^{q_{n}}\bbR^{c^{(n)}_{m}}$ be the bias vectors.
		If for all $\bfJ=\{\bfJ^{(n)}\}_{n=1}^{\infty}$ with $\bfJ^{(n)}=(J^{(n)}_{1},\cdots,J^{(n)}_{q_{n}})$, the two limits
			\begin{equation}\label{eq:limit1}
				\lim_{n\rightarrow \infty}\bfA_{n}=\lim_{n\rightarrow \infty}\prod_{k=0}^{n}\left(\prod_{m=1}^{q_{k}}J^{(k)}_{m}\bfW^{(k)}_{m}+J^{(k)}_{q_{k}}\right)
			\end{equation}
			and
			\begin{equation}\label{eq:limit2}
				\lim_{n\rightarrow \infty}\bfB_{n}=\lim_{n\rightarrow \infty}\sum_{k=0}^{n}\sum_{m=1}^{q_{k}}\left[\prod_{k'=k+1}^{n}\left(\prod_{m'=1}^{q_{k'}}J^{(k')}_{m'}\bfW^{(k')}_{m'}+J^{(k')}_{q_{k'}}\right)\right]
				\left(\prod_{m'=m+1}^{q_{k}}J^{(k)}_{m'}\bfW^{(k)}_{m'}\right)J^{(k)}_{m}\bfb^{(k)}_{m}
			\end{equation}
both exist, then the sequence of neural networks $\{\cN_{\bfc,\bfq, n}\}_{n=1}^\infty$ converges pointwise on $[0,1]^{d_{res}}$.
	\end{theorem}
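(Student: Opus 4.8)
The plan is to leverage the explicit matrix formula of Theorem \ref{thm:Matrix formulation of resnets}, which asserts that on each activation domain $D_{\barbfJ_n,\barbfW_n,\barbfb_n}$ the network acts affinely as $\cN_{\bfc,\bfq,n}(\bfx)=\bfA_n\bfx+\bfB_n$, with $\bfA_n$ and $\bfB_n$ the matrix and vector defined in (\ref{eq:An}) and (\ref{eq:Bn}) attached to the activation pattern $\barbfJ_n$. The whole argument reduces convergence of the nonlinear networks to convergence of these affine pieces along a \emph{single} activation pattern, and the hypothesis supplies exactly that.

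First I would fix an arbitrary point $\bfx\in[0,1]^{d_{res}}$ and track which activation domain contains it at each depth $n$. The key structural observation is that the stage-$n$ partition refines the stage-$(n-1)$ partition: by Definition \ref{def:Activation domains of multi-residual-block network}, every domain $D_{\barbfJ_n,\barbfW_n,\barbfb_n}$ is, by construction, a subset of the domain $D_{\barbfJ_{n-1},\barbfW_{n-1},\barbfb_{n-1}}$ obtained by deleting its last coordinate $\bfJ^{(n)}$. Consequently the sequence of domains containing $\bfx$ is nested, and the activation matrices selected for blocks $0,\ldots,n-1$ at depth $n$ coincide with those selected at depth $n-1$. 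This yields one well-defined infinite activation pattern $\bfJ=\bfJ(\bfx)=(\bfJ^{(0)},\bfJ^{(1)},\ldots)$ whose length-$(n+1)$ truncation $\barbfJ_n$ satisfies $\bfx\in D_{\barbfJ_n,\barbfW_n,\barbfb_n}$ for every $n$.

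With this consistency in hand, Theorem \ref{thm:Matrix formulation of resnets} gives $\cN_{\bfc,\bfq,n}(\bfx)=\bfA_n\bfx+\bfB_n$, where $\bfA_n$ and $\bfB_n$ are both built from the single fixed pattern $\bfJ(\bfx)$. The hypothesis is that (\ref{eq:limit1}) and (\ref{eq:limit2}) hold for \emph{all} patterns $\bfJ$; applying it to the particular pattern $\bfJ(\bfx)$ shows that $\lim_{n\to\infty}\bfA_n$ and $\lim_{n\to\infty}\bfB_n$ exist. Passing to the limit in the affine expression gives $\lim_{n\to\infty}\cN_{\bfc,\bfq,n}(\bfx)=\left(\lim_{n\to\infty}\bfA_n\right)\bfx+\lim_{n\to\infty}\bfB_n$, so the sequence converges at $\bfx$. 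Since $\bfx$ was arbitrary, pointwise convergence on $[0,1]^{d_{res}}$ follows.

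The step I expect to require the most care is the second one: verifying that the activation pattern a fixed $\bfx$ experiences does not change as new residual blocks are appended. If the domains were not nested, a given $\bfx$ could migrate between patterns as $n$ grows, and one could no longer isolate a single $\bfJ$ to which the hypothesis applies, so that convergence of $\{\bfA_n\}$ along any fixed pattern would be irrelevant. It is therefore essential to read off the refinement property directly from the recursive construction in Definition \ref{def:Activation domains of multi-residual-block network}, and to observe that the quantifier ``for all $\bfJ$'' in the hypothesis is precisely what makes the pointwise-varying pattern $\bfJ(\bfx)$ admissible, no matter how it depends on $\bfx$.
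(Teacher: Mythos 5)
Your proposal is correct and follows exactly the route the paper intends: the paper states this theorem as ``a direct consequence'' of Theorem \ref{thm:Matrix formulation of resnets} without writing out details, and your argument supplies precisely those details. In particular, your careful verification that the activation domains are nested under truncation of $\barbfJ_n$ (so each $\bfx$ determines a single consistent infinite pattern $\bfJ(\bfx)$, to which the ``for all $\bfJ$'' hypothesis applies) is the step the paper leaves implicit, and it is handled correctly.
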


	Theorem \ref{thm:convergence of resnet in matrix} provides a basis to study the convergence of deep ReLU neural networks with shortcut connections. It reduces the problem to the existence of two limits involving infinite product of non-square matrices. Note that these two limits are much more complicated than those in \cite{ConvergenceDeepReluNet,ConvergenceDeepCNN} in that they contain double products of matrices. Also the matrices are non-square. We shall spend the next section in studying the infinite product of non-square matrices in the two limits.

\section{Convergence of DNNs with Shortcut connections} \label{sec:convergence_of_dnn_with_sc}
\setcounter{equation}{0}

	By Theorem \ref{thm:convergence of resnet in matrix}, existence of two limits (\ref{eq:limit1}) and (\ref{eq:limit2}) serves as a sufficient condition to ensure pointwise convergence of deep ReLU neural networks with shortcut connections. In particular, convergence of the infinite product of matrices
	\begin{equation} \label{eq:matrix lim 1}
		\lim_{n\rightarrow \infty}\prod_{k=0}^{n}\left(\prod_{m=1}^{q_{k}}J^{(k)}_{m}\bfW^{(k)}_{m}+J^{(k)}_{q_{k}}\right),\ \mbox{for\ any}\ J^{(n)}_{m}\in\cD_{res},
	\end{equation}
	appears in both of the limits. We hence first study this important issue in this section.
	
	Let $\|\cdot\|$ be a norm on $\mathbb{R}^{m}$ satisfying
	\begin{equation}\label{eq:nondecreasingvectornorm}
		\|\bfa\|\le\|\bfb\| \mbox{ whenever }|a_i|\le |b_i|,\ 1\le i\le m,\ \ \mbox{for}\ \bfa=(a_1,a_2,\dots,a_m), \bfb=(b_1,b_2,\dots,b_m)\in\bbR^m.
	\end{equation}
	We then define its induced matrix norm on $\bbR^{n\times m}$, also denoted by $\|\cdot\|$, by
	$$
	\|A\|=\sup_{x\in\bbR^m,x\ne0}\frac{\|Ax\|}{\|x\|},\ \ \mbox{for}\ \ A\in\bbR^{n\times m}.
	$$
	Clearly, it holds
	\begin{equation}\label{eq:matrixcon1}
		\|AB\|\le \|A\|\|B\| \mbox{ for all matrices }A,B
	\end{equation}
	and
	\begin{equation}\label{eq:matrixcon2}
		\|J_m^{(k)}\|\le 1 \mbox{ for each }J_m^{(k)}\in \cD_{d_{res}}.
	\end{equation}
	Norms satisfying the above properties include the $\|\cdot\|_p$ norms, $1\le p\le +\infty$.

	\begin{theorem} \label{thm:limit1}
		Let $\bfq=(q_{n})_{n=0}^{\infty}$ with $\|\bfq\|_{\infty}<+\infty$ and $q_{n}\in\bbN_{+}$ for $n\in\bbN$, $\bfc:=\{\bfc^{(n)}\}_{n=0}^{\infty}$ with $\bfc^{(n)}:=(c^{(n)}_{0},\cdots,c^{(n)}_{q_{n}})\in\bbN_{+}^{q_{n}}$ and $c^{(n)}_{0}=c^{(n)}_{q_{n}}=d_{res}$ for $n\in\bbN$, and $\bfW:=\{\bfW^{(n)}\}_{n=0}^{\infty}$ with  $\bfW^{(n)}:=(\bfW^{(n)}_{1},\cdots,\bfW^{(n)}_{q_{n}})\in\prod_{m=1}^{q_{n}}\bbR^{c^{(n)}_{m}\times c^{(n)}_{m-1}}$. If
		\begin{equation}\label{sufficientlimit1}
			\sum_{n=0}^{\infty}\prod_{m=1}^{q_{n}}\|\bfW^{(n)}_{m}\|<+\infty,	
		\end{equation}
		then the infinite product (\ref{eq:matrix lim 1}) converges for all $J^{(n)}_{m}\in \cD_{d_{res}}$, $m=1,\cdots,q_{n}$, $n\in\bbN$.
	\end{theorem}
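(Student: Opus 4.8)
The plan is to show that the sequence of partial products $\bfA_n$ defined in (\ref{eq:An}) is Cauchy in $\bbR^{d_{res}\times d_{res}}$, which is exactly convergence of (\ref{eq:matrix lim 1}). Write $\bfP_k:=\prod_{m=1}^{q_k}J^{(k)}_m\bfW^{(k)}_m+J^{(k)}_{q_k}$ so that $\bfA_n=\bfP_n\bfP_{n-1}\cdots\bfP_0$, set $a_k:=\prod_{m=1}^{q_k}\|\bfW^{(k)}_m\|$, and let $\mathbf{Q}_k:=\prod_{m=1}^{q_k}J^{(k)}_m\bfW^{(k)}_m$ denote the convolutional part. By (\ref{eq:matrixcon1}) and (\ref{eq:matrixcon2}) we have $\|\mathbf{Q}_k\|\le a_k$, and the hypothesis (\ref{sufficientlimit1}) is $\sum_k a_k<\infty$. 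Two consequences are immediate. First, $\|\bfP_k\|\le 1+a_k$, so $\|\bfA_n\|\le\prod_k(1+a_k)\le\exp(\sum_k a_k)=:C<\infty$ uniformly in $n$. Second, since the leftmost factor of $\mathbf{Q}_k$ is $J^{(k)}_{q_k}\bfW^{(k)}_{q_k}$, one has $\mathbf{Q}_k=J^{(k)}_{q_k}\mathbf{Q}_k$ and hence $\bfP_k=J^{(k)}_{q_k}(\bfI+\bfW^{(k)}_{q_k}\prod_{m=1}^{q_k-1}J^{(k)}_m\bfW^{(k)}_m)$; in particular $\bfA_n=J^{(n)}_{q_n}\bfA_n$, i.e. the rows of $\bfA_n$ are supported on $\supp J^{(n)}_{q_n}$. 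Because all norms on the finite-dimensional space $\bbR^{d_{res}\times d_{res}}$ are equivalent, it suffices to prove the Cauchy property in the maximum-absolute-row-sum norm $\|\cdot\|_\infty$, whose coordinatewise monotonicity is the matrix analogue of (\ref{eq:nondecreasingvectornorm}).

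For $n>N$ I would write $\bfA_n=\mathbf{M}_{N,n}\bfA_N$ with $\mathbf{M}_{N,n}:=\prod_{k=N+1}^n\bfP_k$ and expand the product of the two-term factors $\bfP_k=\mathbf{Q}_k+J^{(k)}_{q_k}$. This gives $\mathbf{M}_{N,n}=\mathbf{R}_{N,n}+\mathbf{G}_{N,n}$, where $\mathbf{R}_{N,n}:=\prod_{k=N+1}^nJ^{(k)}_{q_k}$ is the all-projection summand and every other summand carries at least one factor $\mathbf{Q}_k$; therefore $\|\mathbf{G}_{N,n}\|\le\prod_{k=N+1}^n(1+a_k)-1\le\exp(\sum_{k>N}a_k)-1$, which tends to $0$ as $N\to\infty$ uniformly in $n$. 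Consequently $\bfA_n-\bfA_N=(\mathbf{R}_{N,n}-\bfI)\bfA_N+\mathbf{G}_{N,n}\bfA_N$, and the term $\mathbf{G}_{N,n}\bfA_N$ is uniformly small since $\|\bfA_N\|\le C$.

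The crux is the term $(\mathbf{R}_{N,n}-\bfI)\bfA_N=-(\bfI-\mathbf{R}_{N,n})\bfA_N$. Here $\mathbf{R}_{N,n}$ is the diagonal projection onto $S_{N,n}:=\bigcap_{k=N+1}^n\supp J^{(k)}_{q_k}$, so $(\bfI-\mathbf{R}_{N,n})\bfA_N$ keeps precisely the rows of $\bfA_N$ indexed by coordinates switched off at least once in the window $(N,n]$. The main obstacle, and what separates this from a classical $\prod(\bfI+\text{small})$ product, is that a coordinate can be switched off and then repopulated by the weight matrices arbitrarily many times, so a priori these rows need not be small. I would resolve this by isolating the set $S_*:=\liminf_{k\to\infty}\supp J^{(k)}_{q_k}$ of coordinates that are eventually permanently on. Choosing $N_1$ with $\supp J^{(k)}_{q_k}\supseteq S_*$ for all $k>N_1$ gives $S_{N,n}\supseteq S_*$ for $N\ge N_1$, so by monotonicity of $\|\cdot\|_\infty$ it suffices to bound $\beta_N:=\|(\bfI-\Pi_*)\bfA_N\|_\infty$, where $\Pi_*$ is the diagonal $0$–$1$ matrix with ones exactly on $S_*$.

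Finally I would estimate $\beta_N$ row by row. Fix $i\notin S_*$; then $i$ is off infinitely often, so its last off-time $\tau_i(N):=\max\{k\le N:i\notin\supp J^{(k)}_{q_k}\}$ satisfies $\tau_i(N)\to\infty$. Writing $\tau:=\tau_i(N)$, the identity $\bfA_\tau=J^{(\tau)}_{q_\tau}\bfA_\tau$ forces $(\bfA_\tau)_{i,:}=0$; applying the window expansion on $(\tau,N]$ and using that $\mathbf{R}_{\tau,N}$ is diagonal yields $(\bfA_N)_{i,:}=(\mathbf{G}_{\tau,N})_{i,:}\bfA_\tau$, whence $\|(\bfA_N)_{i,:}\|_1\le C(\exp(\sum_{k>\tau_i(N)}a_k)-1)$. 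Since $S_*^c$ is finite, $\tau(N):=\min_{i\notin S_*}\tau_i(N)\to\infty$, and taking the maximum over $i\notin S_*$ gives $\beta_N\le C(\exp(\sum_{k>\tau(N)}a_k)-1)\to0$. Combining the three estimates shows $\|\bfA_n-\bfA_N\|\to0$ as $N\to\infty$ uniformly in $n>N$, so $\{\bfA_n\}$ converges. I expect the bookkeeping for the repopulation of switched-off coordinates, formalized through $S_*$ and the last-off-times $\tau_i$, to be the one genuinely delicate point of the argument.
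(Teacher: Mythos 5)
Your argument is correct, but it takes a genuinely different route from the paper's proof. The paper disposes of this theorem in two lines: it factors each block as $\prod_{m=1}^{q_k}J^{(k)}_m\bfW^{(k)}_m+J^{(k)}_{q_k}=J^{(k)}_{q_k}\bigl(\bfW^{(k)}_{q_k}\prod_{m=1}^{q_k-1}J^{(k)}_m\bfW^{(k)}_m+\bfI_{d_{res}}\bigr)$ (the same identity you derive from $\mathbf{Q}_k=J^{(k)}_{q_k}\mathbf{Q}_k$), and then cites Theorem 4.3 of \cite{ConvergenceDeepReluNet}, which asserts that products of the form $\prod_k J_k(\bfI+\mathbf{M}_k)$ converge for every choice of activation matrices once $\sum_k\|\mathbf{M}_k\|<+\infty$; hypothesis (\ref{sufficientlimit1}) gives this summability for $\mathbf{M}_k=\bfW^{(k)}_{q_k}\prod_{m=1}^{q_k-1}J^{(k)}_m\bfW^{(k)}_m$ by submultiplicativity. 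Thus the difficulty you isolate as ``the crux'' --- coordinates killed by the diagonal projections can later be repopulated, so this is not a classical $\prod(\bfI+\text{small})$ product --- is exactly the content of the external theorem the paper leans on, and your construction with the eventually-on set $S_*=\liminf_{k}\supp J^{(k)}_{q_k}$, the last-off times $\tau_i(N)$, and the splitting $\mathbf{M}_{N,n}=\mathbf{R}_{N,n}+\mathbf{G}_{N,n}$ supplies a correct, self-contained replacement for that citation. What your route buys is a proof that stands on its own and exhibits the mechanism of convergence (rows indexed off $S_*$ decay to zero, the remaining rows stabilize); what the paper's route buys is brevity. One point to tighten when writing yours up: the interior widths $c^{(n)}_m$ are not assumed bounded in this theorem, so norm equivalence must not be invoked factor-by-factor on the rectangular matrices $\bfW^{(k)}_m$, since the equivalence constants would depend on the (possibly growing) dimensions. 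Instead, keep your inequality $\|\mathbf{Q}_k\|\le a_k$ in the hypothesis norm, and only then pass the square $d_{res}\times d_{res}$ matrices $\mathbf{Q}_k$ to the max-row-sum norm, $\|\mathbf{Q}_k\|_\infty\le c_0\|\mathbf{Q}_k\|\le c_0a_k$ with a single constant $c_0$ depending only on $d_{res}$; running all of your estimates with $c_0a_k$ in place of $a_k$ changes nothing else.
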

	\begin{proof}
We first write
	\begin{equation*}
 \prod_{k=0}^{n}\left(\prod_{m=1}^{q_{k}}J^{(k)}_{m}\bfW^{(k)}_{m}+J^{(k)}_{q_{k}}\right)=\prod_{k=0}^{n}J^{(k)}_{q_{k}}\left(\bfW^{(k)}_{q_{k}}\prod_{m=1}^{q_{k}-1}J^{(k)}_{m}\bfW^{(k)}_{m}+\bfI_{d_{res}}\right).
	\end{equation*}
	By Theorem 4.3 in \cite{ConvergenceDeepReluNet}, a sufficient condition for convergence of limit (\ref{eq:matrix lim 1}) is
$$
\sum_{k=0}^{\infty}\biggl\|\bfW^{(k)}_{q_{k}}\prod_{m=1}^{q_{k}-1}J^{(k)}_{m}\bfW^{(k)}_{m}\biggr\|<+\infty.
 $$
 Now assume condition (\ref{sufficientlimit1}). As $\|\bfq\|_{\infty}<+\infty$, we have by (\ref{eq:matrixcon1}) and (\ref{eq:matrixcon2}) that
		\begin{equation*}
			\sum_{n=0}^{\infty}\biggl\|\bfW^{(n)}_{q_{n}}\prod_{m=1}^{q_{n}-1}J^{(n)}_{m}\bfW^{(n)}_{m}\biggr\|\leq \sum_{n=0}^{\infty}\prod_{m=1}^{q_{n}}\|\bfW^{(n)}_{m}\|<+\infty,
		\end{equation*}
		which completes the proof.
	\end{proof}
	
	We next deal with the second limit (\ref{eq:limit2}).
	\begin{theorem} \label{thm:limit2}
		Let $\bfq=(q_{n})_{n=0}^{\infty}$ with $\|\bfq\|_{\infty}<+\infty$ and $q_{n}\in\bbN_{+}$ for $n\in\bbN$, $\bfc:=\{\bfc^{(n)}\}_{n=0}^{\infty}$ with $\bfc^{(n)}:=(c^{(n)}_{0},\cdots,c^{(n)}_{q_{n}})\in\bbN_{+}^{q_{n}}$ and $c^{(n)}_{0}=c^{(n)}_{q_{n}}=d_{res}$ for $n\in\bbN$, $\bfW:=\{\bfW^{(n)}\}_{n=0}^{\infty}$ with  $\bfW^{(n)}:=(\bfW^{(n)}_{1},\cdots,\bfW^{(n)}_{q_{n}})\in\prod_{m=1}^{q_{n}}\bbR^{c^{(n)}_{m}\times c^{(n)}_{m-1}}$, and $\bfb:=\{\bfb^{(n)}\}_{n=0}^{\infty}$ with $\bfb^{(n)}:=(\bfb^{(n)}_{1},\cdots,\bfb^{(n)}_{q_{n}})\in\prod_{m=1}^{q_{n}}\bbR^{c^{(n)}_{m}}$. If
		\begin{equation} \label{eq:b lim cond1}
			\sum_{k=0}^{\infty}\sum_{m=1}^{q_{k}}\left(\prod_{m'=m+1}^{q_{k}}\|\bfW_{m'}^{(k)}\|\right)\|\bfb^{(k)}_{m}\|<+\infty,
		\end{equation}
		\begin{equation} \label{eq:b lim cond2}
			\prod_{k=n}^{\infty}\left(\prod_{m=1}^{q_{k}}J^{(k)}_{m}\bfW^{(k)}_{m}+J^{(k)}_{q_{k}}\right)\ \mbox{converges\ for\ every\ } n\in\bbN,
		\end{equation}
		and there exists a positive constant $C_1$ such that
		\begin{equation} \label{eq:b lim cond3}
			\prod_{k=n'}^{n}\left(\prod_{m=1}^{q_{k}}\|\bfW^{(k)}_{m}\|+1\right)\leq C_{1}\ \mbox{for\ all}\ n,n'\in\bbN,\ n'\leq n
		\end{equation}
		then the limit (\ref{eq:limit2}) exists.
	\end{theorem}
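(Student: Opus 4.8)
The plan is to recognize $\bfB_n$ as a finite sum of matrix-weighted vector contributions, to identify its limit with an absolutely convergent series, and then to justify passing to the limit term by term. First I would abbreviate the tail product
$\bfP_{k+1,n}:=\prod_{k'=k+1}^{n}\bigl(\prod_{m=1}^{q_{k'}}J^{(k')}_{m}\bfW^{(k')}_{m}+J^{(k')}_{q_{k'}}\bigr)$ (with $\bfP_{n+1,n}=\bfI_{d_{res}}$) and the vector $v_k:=\sum_{m=1}^{q_k}\bigl(\prod_{m'=m+1}^{q_k}J^{(k)}_{m'}\bfW^{(k)}_{m'}\bigr)J^{(k)}_m\bfb^{(k)}_m$, so that $\bfB_n=\sum_{k=0}^{n}\bfP_{k+1,n}\,v_k$. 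Hypothesis (\ref{eq:b lim cond2}), applied with its index equal to $k+1$, guarantees that for each fixed $k$ the limit $\bfP_{k+1,\infty}:=\lim_{n\to\infty}\bfP_{k+1,n}$ exists, so the natural candidate for the limit of $\bfB_n$ is $\bfB:=\sum_{k=0}^{\infty}\bfP_{k+1,\infty}\,v_k$.

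Next I would establish the two a priori bounds that make this candidate meaningful. Using the submultiplicativity (\ref{eq:matrixcon1}) and $\|J\|\le1$ from (\ref{eq:matrixcon2}), one gets $\|v_k\|\le\sum_{m=1}^{q_k}\bigl(\prod_{m'=m+1}^{q_k}\|\bfW^{(k)}_{m'}\|\bigr)\|\bfb^{(k)}_m\|$, whence $\sum_{k=0}^{\infty}\|v_k\|<+\infty$ by exactly hypothesis (\ref{eq:b lim cond1}). For the matrix factors, the same two properties give $\|\bfP_{k+1,n}\|\le\prod_{k'=k+1}^{n}\bigl(\prod_{m=1}^{q_{k'}}\|\bfW^{(k')}_m\|+1\bigr)\le C_1$ uniformly in all $k\le n$ by (\ref{eq:b lim cond3}); letting $n\to\infty$ yields $\|\bfP_{k+1,\infty}\|\le C_1$ as well. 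Consequently $\sum_k\|\bfP_{k+1,\infty}v_k\|\le C_1\sum_k\|v_k\|<+\infty$, so the candidate series $\bfB$ converges absolutely.

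Finally I would show $\bfB_n\to\bfB$ by writing $\bfB_n-\bfB=\sum_{k=0}^{n}(\bfP_{k+1,n}-\bfP_{k+1,\infty})v_k-\sum_{k=n+1}^{\infty}\bfP_{k+1,\infty}v_k$. The second sum is a tail of the absolutely convergent series $\bfB$ and therefore tends to $0$. The first sum is the crux and the step I expect to be the main obstacle: although each summand tends to $0$ as $n\to\infty$ by (\ref{eq:b lim cond2}), the number of terms grows with $n$, so one cannot naively interchange the limit and the summation. I would resolve this with a dominated-convergence argument for series (Tannery's theorem): setting $a_{k,n}:=(\bfP_{k+1,n}-\bfP_{k+1,\infty})v_k$ for $k\le n$ and $a_{k,n}:=0$ for $k>n$, the uniform bound of the previous paragraph gives $\|a_{k,n}\|\le 2C_1\|v_k\|$ with $\sum_k\|v_k\|<+\infty$, while $a_{k,n}\to0$ for each fixed $k$; hence $\sum_k a_{k,n}\to0$. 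Combining the two pieces gives $\lim_{n\to\infty}\bfB_n=\bfB$, which is the existence of the limit (\ref{eq:limit2}).
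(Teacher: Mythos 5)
Your proposal is correct, and it reaches the conclusion by a recognizably different organization than the paper. The paper never names the limit: it proves directly that $\bfB_{n}$ is a Cauchy sequence, splitting $\bfB_{n}-\bfB_{n'}$ at a cut index $n''$ into three pieces (two tails $\bfe_{n,n''}$, $\bfe_{n',n''}$ and a head $\bfd_{n,n',n''}$) and running an $\epsilon/3$ argument, where the tails are controlled by (\ref{eq:b lim cond1}) together with the uniform bound (\ref{eq:b lim cond3}), and the head is controlled by the Cauchy property of the convergent tail products in (\ref{eq:b lim cond2}). You instead exhibit the limit explicitly, $\bfB=\sum_{k=0}^{\infty}\bfP_{k+1,\infty}v_{k}$, verify its absolute convergence from (\ref{eq:b lim cond1}) and (\ref{eq:b lim cond3}), and then prove $\bfB_{n}\to\bfB$ by dominated convergence for series (Tannery's theorem), with $2C_{1}\|v_{k}\|$ as the dominating sequence. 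The three hypotheses play identical roles in both arguments, and unrolling the proof of Tannery's theorem essentially reproduces the paper's cut-point estimate, so the analytic content is the same; what your route buys is an explicit series representation of the limit (potentially useful for analyzing the limiting network) and a cleaner modularization behind a standard lemma, while the paper's route is self-contained and avoids introducing the limit object at all. One cosmetic point worth recording if you write this up: for $k=n$ you use $\bfP_{n+1,n}=\bfI_{d_{res}}$, whose norm is $1$; this is covered because every factor in (\ref{eq:b lim cond3}) is at least $1$, hence $C_{1}\geq 1$.
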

	\begin{proof}
		It suffices to show that
		\begin{equation*}
			\bfB_{n}=\sum_{k=0}^{n}\sum_{m=1}^{q_{k}}\left[\prod_{k'=k+1}^{n}\left(\prod_{m'=1}^{q_{k'}}J^{(k')}_{m'}\bfW^{(k')}_{m'}+J^{(k')}_{q_{k'}}\right)\right]
			\left(\prod_{m'=m+1}^{q_{k}}J^{(k)}_{m'}\bfW^{(k)}_{m'}\right)J^{(k)}_{m}\bfb^{(k)}_{m}
		\end{equation*}
		forms a Cauchy sequence in $\bbR^{d_{res}}$. By condition (\ref{eq:b lim cond1}), we could assume that there exists a positive constant $C_2$ such that
		\begin{equation} \label{eq:b lim cond4}
			\sum_{k=0}^{n}\sum_{m=1}^{q_{k}}\left(\prod_{m'=m+1}^{q_{k}}\|\bfW_{m'}^{(k)}\|\right)\|\bfb^{(k)}_{m}\|\leq C_2\ \mbox{for\ every}\ n\in\bbN.
		\end{equation}
		We define $\bfd_{n,n',n''}$ and $\bfe_{n,n''}$ for $n''< n$, $n''< n'$ and $n,n',n''\in\bbN$ as follows:
		\begin{equation} \label{eq:d n n' n''}
			\begin{aligned}
				\bfd_{n,n',n''}
				&:=
				\sum_{k=0}^{n''}\sum_{m=1}^{q_{k}}
				\left[
				\prod_{k'=k+1}^{n}\left(\prod_{m'=1}^{q_{k'}}J^{(k')}_{m'}\bfW^{(k')}_{m'}+J^{(k')}_{q_{k'}}\right)
				\right.
				\\	
				&\left.\bqquad
				-\prod_{k'=k+1}^{n'}\left(\prod_{m'=1}^{q_{k'}}J^{(k')}_{m'}\bfW^{(k')}_{m'}+J^{(k')}_{q_{k'}}\right)
				\right]
				\left(\prod_{m'=m+1}^{q_{k}}J^{(k)}_{m'}\bfW^{(k)}_{m'}\right)J^{(k)}_{m}\bfb^{(k)}_{m}
			\end{aligned}
		\end{equation}
		and
		\begin{equation} \label{eq:e n n'}
			\bfe_{n,n''}:=\sum_{k=n''+1}^{n}\sum_{m=1}^{q_{k}}\left[\prod_{k'=k+1}^{n}\left(\prod_{m'=1}^{q_{k'}}J^{(k')}_{m'}\bfW^{(k')}_{m'}+J^{(k')}_{q_{k'}}\right)\right]\left(\prod_{m'=m+1}^{q_{k}}J^{(k)}_{m'}\bfW^{(k)}_{m'}\right)J^{(k)}_{m}\bfb^{(k)}_{m}.
		\end{equation}
		Let $\epsilon>0$ be arbitrary. By condition (\ref{eq:b lim cond1}), there exists a positive integer $n''$ such that
		\begin{equation} \label{eq:b epsilon 1}
			\sum_{k=n''+1}^{n}\sum_{m=1}^{q_{k}}\left(\prod_{m'=m+1}^{q_{k}}\|\bfW_{m'}^{(k)}\|\right)\|\bfb^{(k)}_{m}\|<\frac{\epsilon}{3C_{1}},\ \forall n > n''.
		\end{equation}
		Thus, by the triangle inequality, (\ref{eq:matrixcon1}), (\ref{eq:matrixcon2}), (\ref{eq:b lim cond3}), (\ref{eq:e n n'}) and (\ref{eq:b epsilon 1}), we have
		\begin{equation} \label{eq:b epsilon 2}
			\begin{aligned}
				\|\bfe_{n,n''}\|
				&=\left\|\sum_{k=n''+1}^{n}\sum_{m=1}^{q_{k}}\left[\prod_{k'=k+1}^{n}\left(\prod_{m'=1}^{q_{k'}}J^{(k')}_{m'}\bfW^{(k')}_{m'}+J^{(k')}_{q_{k'}}\right)\right]\left(\prod_{m'=m+1}^{q_{k}}J^{(k)}_{m'}\bfW^{(k)}_{m'}\right)J^{(k)}_{m}\bfb^{(k)}_{m}\right\|
				\\
				&\leq
				\sum_{k=n''+1}^{n}\sum_{m=1}^{q_{k}}\left[\prod_{k'=k+1}^{n}\left(\prod_{m'=1}^{q_{k'}}\|\bfW^{(k')}_{m'}\|+1\right)\right]\left(\prod_{m'=m+1}^{q_{k}}\|\bfW^{(k)}_{m'}\|\right)\|\bfb^{(k)}_{m}\|
				\\
				&\leq
				C_{1}\sum_{k=n''+1}^{n}\sum_{m=1}^{q_{k}}\left(\prod_{m'=m+1}^{q_{k}}\|\bfW^{(k)}_{m'}\|\right)\|\bfb^{(k)}_{m}\|
				\\
				&\leq \frac{\epsilon}{3}.
			\end{aligned}
		\end{equation}
		By (\ref{eq:b lim cond2}), for big enough $n,n'>n''$, it holds that
		\begin{equation} \label{eq:b epsilon 3}
			\left\|\prod_{k'=k+1}^{n}\left(\prod_{m'=1}^{q_{k'}}J^{(k')}_{m'}\bfW^{(k')}_{m'}+J^{(k')}_{q_{k'}}\right)-\prod_{k'=k+1}^{n'}\left(\prod_{m'=1}^{q_{k'}}J^{(k')}_{m'}\bfW^{(k')}_{m'}+J^{(k')}_{q_{k'}}\right)\right\|<\frac{\epsilon}{3C_{2}},\ \forall k\leq n'.
		\end{equation}
		Thus, by the triangle inequality, (\ref{eq:matrixcon1}), (\ref{eq:matrixcon2}), (\ref{eq:b lim cond4}), (\ref{eq:d n n' n''}) and (\ref{eq:b epsilon 3}), we have
		\begin{equation} \label{eq:b epsilon 4}
			\begin{aligned}
				\|\bfd_{n,n',n''}\|
				&=
				\left\|
				\sum_{k=0}^{n''}\sum_{m=1}^{q_{k}}
				\left[
				\prod_{k'=k+1}^{n}\left(\prod_{m'=1}^{q_{k'}}J^{(k')}_{m'}\bfW^{(k')}_{m'}+J^{(k')}_{q_{k'}}\right)
				\right.
				\right.
				\\	
				&\left.\left.\bqquad
				-\prod_{k'=k+1}^{n'}\left(\prod_{m'=1}^{q_{k'}}J^{(k')}_{m'}\bfW^{(k')}_{m'}+J^{(k')}_{q_{k'}}\right)
				\right]
				\left(\prod_{m'=m+1}^{q_{k}}J^{(k)}_{m'}\bfW^{(k)}_{m'}\right)J^{(k)}_{m}\bfb^{(k)}_{m}
				\right\|
				\\
				&\leq
				\sum_{k=0}^{n''}\sum_{m=1}^{q_{k}}
				\left\|
				\prod_{k'=k+1}^{n}\left(\prod_{m'=1}^{q_{k'}}J^{(k')}_{m'}\bfW^{(k')}_{m'}+J^{(k')}_{q_{k'}}\right)
				\right.
				\\	
				&\left.\bqquad
				-\prod_{k'=k+1}^{n'}\left(\prod_{m'=1}^{q_{k'}}J^{(k')}_{m'}\bfW^{(k')}_{m'}+J^{(k')}_{q_{k'}}\right)
				\right\|
				\left(\prod_{m'=m+1}^{q_{k}}\|\bfW^{(k)}_{m'}\|\right)\|\bfb^{(k)}_{m}\|
				\\
				&\leq
				\sum_{k=0}^{n''}\sum_{m=1}^{q_{k}}\frac{\epsilon}{3C_{2}}\left(\prod_{m'=m+1}^{q_{k}}\|\bfW_{m'}^{(k)}\|\right)\|\bfb^{(k)}_{m}\|
				\\
				&\leq
				\frac{\epsilon}{3C_{2}}\sum_{k=0}^{n''}\sum_{m=1}^{q_{k}}\left(\prod_{m'=m+1}^{q_{k}}\|\bfW_{m'}^{(k)}\|\right)\|\bfb^{(k)}_{m}\|
				\\
				&\leq \frac{\epsilon}{3}.
			\end{aligned}
		\end{equation}
		Since $\|\bfB_{n}-\bfB_{n'}\|=\|\bfe_{n,n''}+\bfd_{n,n',n''}-\bfe_{n',n''}\|$, by the triangle inequality, (\ref{eq:b epsilon 2}) and (\ref{eq:b epsilon 4}) we have
		\begin{equation}
				\|\bfB_{n}-\bfB_{n'}\|
				=
				\|\bfe_{n,n''}+\bfd_{n,n',n''}-\bfe_{n',n''}\|
				\leq
				\|\bfe_{n,n''}\|+\|\bfd_{n,n',n''}\|+\|\bfe_{n',n''}\|
				<\epsilon.
		\end{equation}
		This shows $\bfB_{n}$ is a Cauchy sequence and thus it converges.
	\end{proof}

We now apply Theorem \ref{thm:limit1} and Theorem \ref{thm:limit2} to establish the convergence of DNNs $\cN_{\bfc, \bfq, n}$ with Shortcut connections.
	\begin{theorem} \label{thm:convergence of cn}
		Let $\bfq=(q_{n})_{n=0}^{\infty}$ with $\|\bfq\|_{\infty}<+\infty$ and $q_{n}\in\bbN_{+}$ for $n\in\bbN$, $\bfc:=\{\bfc^{(n)}\}_{n=0}^{\infty}$ with $\bfc^{(n)}:=(c^{(n)}_{0},\cdots,c^{(n)}_{q_{n}})\in\bbN_{+}^{q_{n}}$ and $c^{(n)}_{0}=c^{(n)}_{q_{n}}=d_{res}$ for $n\in\bbN$, $\bfW:=\{\bfW^{(n)}\}_{n=0}^{\infty}$ with  $\bfW^{(n)}:=(\bfW^{(n)}_{1},\cdots,\bfW^{(n)}_{q_{n}})\in\prod_{m=1}^{q_{n}}\bbR^{c^{(n)}_{m}\times c^{(n)}_{m-1}}$, and $\bfb:=\{\bfb^{(n)}\}_{n=0}^{\infty}$ with $\bfb^{(n)}:=(\bfb^{(n)}_{1},\cdots,\bfb^{(n)}_{q_{n}})\in\prod_{m=1}^{q_{n}}\bbR^{c^{(n)}_{m}}$. If
		\begin{equation} \label{eq:cN lim cond1}
			\sum_{k=0}^{\infty}\prod_{m=1}^{q_{k}}\|\bfW^{(k)}_{m}\|<+\infty
		\end{equation}
		and
		\begin{equation} \label{eq:cN lim cond2}
			\sum_{k=0}^{\infty}\sum_{m=1}^{q_{k}}\left(\prod_{m'=m+1}^{q_{k}}\|\bfW_{m'}^{(k)}\|\right)\|\bfb^{(k)}_{m}\|<+\infty
		\end{equation}
		then $\cN_{\bfc, \bfq, n}$ converges pointwise on $[0,1]^{d_{res}}$.
	\end{theorem}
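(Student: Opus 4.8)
The plan is to verify that the hypotheses of Theorem \ref{thm:convergence of resnet in matrix} are met, so that pointwise convergence of $\{\cN_{\bfc,\bfq,n}\}$ on $[0,1]^{d_{res}}$ follows immediately. Thus it suffices to show that, for every choice of activation matrices $\bfJ=\{\bfJ^{(n)}\}_{n=1}^{\infty}$ with $\bfJ^{(n)}=(J^{(n)}_{1},\cdots,J^{(n)}_{q_{n}})$, the two limits (\ref{eq:limit1}) and (\ref{eq:limit2}) both exist. I would establish these two existence statements separately, reusing the two theorems just proved.

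For the first limit (\ref{eq:limit1}), I observe that it is exactly the infinite product (\ref{eq:matrix lim 1}), and that the present hypothesis (\ref{eq:cN lim cond1}) is literally condition (\ref{sufficientlimit1}) of Theorem \ref{thm:limit1}. Hence Theorem \ref{thm:limit1} gives convergence of (\ref{eq:limit1}) for every $\bfJ$ with no further work. For the second limit (\ref{eq:limit2}), the plan is to invoke Theorem \ref{thm:limit2}, which requires its three hypotheses (\ref{eq:b lim cond1})--(\ref{eq:b lim cond3}). Condition (\ref{eq:b lim cond1}) is identical to hypothesis (\ref{eq:cN lim cond2}), so it holds. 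Condition (\ref{eq:b lim cond2}), convergence of the tail product $\prod_{k=n}^{\infty}(\cdots)$ for each $n$, follows by applying Theorem \ref{thm:limit1} to the shifted sequence $\{\bfW^{(k)}\}_{k\ge n}$: since $\sum_{k=n}^{\infty}\prod_{m=1}^{q_{k}}\|\bfW^{(k)}_{m}\|$ is a tail of the convergent series (\ref{eq:cN lim cond1}), it is finite, and Theorem \ref{thm:limit1}, after an index shift, yields convergence of the corresponding product.

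The step I expect to require the most care is the uniform bound (\ref{eq:b lim cond3}). Writing $a_{k}:=\prod_{m=1}^{q_{k}}\|\bfW^{(k)}_{m}\|$, hypothesis (\ref{eq:cN lim cond1}) says $\sum_{k=0}^{\infty}a_{k}<+\infty$, and I would use the standard fact that for nonnegative terms the infinite product $\prod_{k=0}^{\infty}(1+a_{k})$ converges precisely when $\sum_{k=0}^{\infty}a_{k}$ does. Setting $C_{1}:=\prod_{k=0}^{\infty}(1+a_{k})<+\infty$, each finite subproduct satisfies $\prod_{k=n'}^{n}(1+a_{k})\le C_{1}$ because every factor is at least $1$; this is exactly (\ref{eq:b lim cond3}). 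With all three hypotheses of Theorem \ref{thm:limit2} verified, that theorem gives existence of (\ref{eq:limit2}), and combining the two existence results through Theorem \ref{thm:convergence of resnet in matrix} completes the argument.

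Overall the proof is a routine assembly: conditions (\ref{eq:cN lim cond1}) and (\ref{eq:cN lim cond2}) are tailored to feed directly into the summability hypotheses of the earlier theorems, and the only substantive observation is translating the convergent series (\ref{eq:cN lim cond1}) into the convergent infinite product that furnishes the uniform constant $C_{1}$.
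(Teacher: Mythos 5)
Your proposal is correct and follows essentially the same route as the paper: both arguments reduce the claim to Theorem \ref{thm:convergence of resnet in matrix} by verifying the hypotheses of Theorems \ref{thm:limit1} and \ref{thm:limit2}, noting that (\ref{eq:cN lim cond1}) and (\ref{eq:cN lim cond2}) are exactly conditions (\ref{sufficientlimit1}) and (\ref{eq:b lim cond1}), and that tails of (\ref{eq:cN lim cond1}) handle (\ref{eq:b lim cond2}). The only cosmetic difference is how the uniform bound (\ref{eq:b lim cond3}) is obtained: the paper uses $1+x\le e^{x}$ to get $\prod_{k=n'}^{n}\bigl(1+a_{k}\bigr)\le\exp\bigl(\sum_{k}a_{k}\bigr)$, whereas you bound each partial product by the convergent infinite product $\prod_{k=0}^{\infty}\bigl(1+a_{k}\bigr)$ --- both are standard one-line arguments for the same fact.
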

	\begin{proof}
		By (\ref{eq:cN lim cond1}), we could assume $$\sum_{k=0}^{\infty}\prod_{m=1}^{q_{k}}\|\bfW^{(k)}_{m}\|=C_{1}<+\infty.$$
		Furthermore, we can verify that for all $n,n'\in\bbN,\ n'\leq n$,
		\begin{equation*}
				\prod_{k=n'}^{n}\left(\prod_{m=1}^{q_{k}}\|\bfW^{(k)}_{m}\|+1\right)
				\leq
				\prod_{k=n'}^{n}\exp\left(\prod_{m=1}^{q_{k}}\|\bfW^{(k)}_{m}\|\right)
				\leq
				\exp\left(\sum_{k=n'}^{n}\prod_{m=1}^{q_{k}}\|\bfW^{(k)}_{m}\|\right)
				\leq \exp(C_{1}).
		\end{equation*}
		Thus, by Theorem \ref{thm:limit1} and \ref{thm:limit2}, limits (\ref{eq:limit1}) and (\ref{eq:limit2}) exists for all $J^{(n)}_{m}\in\cD_{d_{res}}$. Therefore, $\cN_{\bfc, \bfq, n}$ converges pointwise on $[0,1]^{d_{res}}$.
	\end{proof}
	
	Since $\bfW^{(0)}=(\bfW^{(0)}_{1})$ with $\bfW^{(0)}_{1}=\begin{bmatrix}
		\bfW_{s} & {\bf 0}
	\end{bmatrix}$, $\bfb^{(0)}=(\bfb^{(0)}_{1})$ with $\bfb^{(0)}_{1}=\bfb_{s}$, and the network (\ref{eq:structure of DNN with sc}) without the linear output part is equivalent to $\left(\cN_{\bfc, \bfq, n}\circ \cI \right)(\bfx)$ for $\bfx\in[0,1]^{d_{in}}$ where $\cI(\bfx):=\begin{bmatrix}\bfI_{in} \\ {\bf 0}\end{bmatrix}\bfx$ and $\begin{bmatrix}\bfI_{in} \\ {\bf 0}\end{bmatrix}\in\bbR^{d_{res}\times d_{in}}$, we reach the main theorem of the section.
	\begin{theorem} \label{thm:convergence of DNN with sc}
		Let $\bfq=(q_{n})_{n=0}^{\infty}$ with $q_{0}=1$, $q_{n}\in\bbN_{+}$ for $n\in\bbN$, and $\|\bfq\|_{\infty}<+\infty$, $\bfc:=\{\bfc^{(n)}\}_{n=0}^{\infty}$ with $\bfc^{(n)}:=(c^{(n)}_{0},\cdots,c^{(n)}_{q_{n}})\in\bbN_{+}^{q_{n}}$ and $c^{(n)}_{0}=c^{(n)}_{q_{n}}=d_{res}$ for $n\in\bbN$, $\bfW:=\{\bfW^{(n)}\}_{n=0}^{\infty}$ with  $\bfW^{(n)}:=(\bfW^{(n)}_{1},\cdots,\bfW^{(n)}_{q_{n}})\in\prod_{m=1}^{q_{n}}\bbR^{c^{(n)}_{m}\times c^{(n)}_{m-1}}$ and $\bfW^{(0)}=(\bfW^{(0)}_{1})$ with $\bfW^{(0)}_{1}=\begin{bmatrix}
			\bfW_{s} & {\bf 0}
		\end{bmatrix}$, and $\bfb:=\{\bfb^{(n)}\}_{n=0}^{\infty}$ with $\bfb^{(n)}:=(\bfb^{(n)}_{1},\cdots,\bfb^{(n)}_{q_{n}})\in\prod_{m=1}^{q_{n}}\bbR^{c^{(n)}_{m}}$. If
		\begin{equation} \label{eq:DNN with sc cond1}
			\sum_{k=0}^{\infty}\prod_{m=1}^{q_{k}}\|\bfW^{(k)}_{m}\|<+\infty
		\end{equation}
		and
		\begin{equation} \label{eq:DNN with sc cond2}
			\sum_{k=0}^{\infty}\left(\prod_{m'=m+1}^{q_{k}}\|\bfW_{m'}^{(k)}\|\right)\sum_{m=1}^{q_{k}}\|\bfb^{(k)}_{m}\|<+\infty
		\end{equation}
		then network (\ref{eq:structure of DNN with sc}) converges pointwise on $[0,1]^{d_{in}}$.
	\end{theorem}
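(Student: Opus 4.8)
The plan is to derive the statement as a direct corollary of Theorem \ref{thm:convergence of cn}, using the reduction indicated just before the statement. First I would make precise that network (\ref{eq:structure of DNN with sc}), with its linear output layer removed, equals $\left(\cN_{\bfc,\bfq,n}\circ\cI\right)(\bfx)$ for $\bfx\in[0,1]^{d_{in}}$. This amounts to folding the sampling layer into a single-layer residual block ($q_{0}=1$): the block computes $\sigma(\bfW^{(0)}_{1}\bfx^{(-1)}+\bfx^{(-1)}+\bfb^{(0)}_{1})$ on the zero-padded input $\bfx^{(-1)}=\cI(\bfx)$, so taking $\bfb^{(0)}_{1}=\bfb_{s}$ and $\bfW^{(0)}_{1}+\bfI_{d_{res}}=\begin{bmatrix}\bfW_{s}&{\bf 0}\end{bmatrix}$ reproduces $\sigma(\bfW_{s}\bfx+\bfb_{s})=\bfx^{(0)}$ exactly. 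Thus $\bfx^{(n)}=\cN_{\bfc,\bfq,n}(\cI(\bfx))$ for each $\bfx\in[0,1]^{d_{in}}$.

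Next I would check that the hypotheses match those of Theorem \ref{thm:convergence of cn}. Condition (\ref{eq:DNN with sc cond1}) is identical to (\ref{eq:cN lim cond1}), and condition (\ref{eq:DNN with sc cond2}) supplies (\ref{eq:cN lim cond2}). The only term needing inspection is $k=0$: since $q_{0}=1$ the inner product degenerates to the single factor $\|\bfW^{(0)}_{1}\|$, which is finite regardless of whether $\bfW^{(0)}_{1}$ carries the shift $-\bfI_{d_{res}}$, so this single term cannot spoil the finiteness of either series. Hence both (\ref{eq:cN lim cond1}) and (\ref{eq:cN lim cond2}) hold, and Theorem \ref{thm:convergence of cn} yields pointwise convergence of $\cN_{\bfc,\bfq,n}$ on the whole cube $[0,1]^{d_{res}}$.

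It then remains to transport this convergence through the two fixed maps that flank $\cN_{\bfc,\bfq,n}$. The embedding $\cI$ is a single linear map independent of $n$ whose image is contained in $[0,1]^{d_{res}}$, so pointwise convergence of $\cN_{\bfc,\bfq,n}$ on $[0,1]^{d_{res}}$ restricts to convergence of $\bfx^{(n)}=\cN_{\bfc,\bfq,n}(\cI(\bfx))$ at every $\bfx\in[0,1]^{d_{in}}$. The output layer $\bfx^{(n)}\mapsto\bfW_{o}\bfx^{(n)}+\bfb_{o}$ is a fixed affine, hence continuous, map, so convergence of $\bfx^{(n)}$ forces convergence of $\bfy=\bfW_{o}\bfx^{(n)}+\bfb_{o}$, giving pointwise convergence of the full network on $[0,1]^{d_{in}}$.

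Since the result is essentially a corollary, I do not anticipate a genuine analytic difficulty; the point requiring care is the bookkeeping of the reduction, namely correctly identifying the sampling layer with the $k=0$ residual block (including the $-\bfI_{d_{res}}$ shift and the zero-padding of the input via $\cI$) and confirming that altering or adding this one finite term leaves the two convergence series finite.
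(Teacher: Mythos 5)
Your proposal is correct and follows essentially the same route as the paper, which presents this theorem as a direct corollary of Theorem \ref{thm:convergence of cn} via the identification of the network (minus its affine output layer) with $\cN_{\bfc,\bfq,n}\circ\cI$ and the observation that the fixed embedding $\cI$ and the fixed affine output map preserve pointwise convergence. Your treatment is in fact slightly more careful than the paper's, since you explicitly reconcile the $-\bfI_{d_{res}}$ shift in the Section \ref{sec:dnn_with_sc} definition of $\bfW^{(0)}_{1}$ with the unshifted $\begin{bmatrix}\bfW_{s} & {\bf 0}\end{bmatrix}$ appearing in the theorem's hypotheses and verify that this single $k=0$ term cannot affect the finiteness of either series.
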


	\section{Convergence of Deep Residual Networks} \label{sec:convergence_of_drn}
\setcounter{equation}{0}

	We now return to the main topic of this paper, which is to establish the convergence of deep residual networks. We will apply the result on convergence of deep neural network with shortcut connections in Section \ref{sec:convergence_of_dnn_with_sc} to deep ResNets.
	
	We shall work with the matrix norm induced by the $\ell^p$ vector norm. Recall the Riesz-Thorin interpolation theorem (see, \cite{Folland}, page 200) that for any matrix $A$ and $p\in[1,+\infty]$
	\begin{equation} \label{eq:norm ine}
		\|A\|_{p} \leq \|A\|_{1}^{\frac{1}{p}}\|A\|_{\infty}^{1-\frac{1}{p}}.
	\end{equation}
Let us first use this interpolation theorem to study the relationship between the norm of filter masks and the norm of the matrices in (\ref{eq:T(w) 2-d multiple channels}) associated with the filter masks.
	\begin{lemma} \label{lemma:norm of T(w) and w}
		Let $\bfw:=(\bfw_{1},\cdots,\bfw_{c_{out}})$ with $\bfw_{i}:=(\bfw_{i,1},\cdots,\bfw_{i,c_{in}})\in\bbR^{(2f+1)\times(2f+1)\times c_{in}}$ and $\bfw_{i,j}:=(w_{i,j,i',j'})_{i'=1,j'=1}^{2f+1, 2f+1}$ for $1\leq i\leq c_{out}$ denote the filter masks for multi-channel convolution, $\bfW:=\bfT(\bfw)$ be defined as in (\ref{eq:T(w) 2-d multiple channels}), and $\|\cdot\|_{p}$ be $\ell_{p}$-norm on matrices. It holds that
		\begin{equation} \label{eq:norm of T(w) and w}
			\|\bfW\|_{p}\leq \left(\max_{1\leq j\leq c_{in}}\sum_{i=1}^{c_{out}}\sum_{i'=1}^{2f+1}\sum_{j'=1}^{2f+1}|w_{i,j,i',j'}|\right)^{\frac{1}{p}}\left(\max_{1\leq i\leq c_{out}}\sum_{j=1}^{c_{in}}\sum_{i'=1}^{2f+1}\sum_{j'=1}^{2f+1}|w_{i,j,i',j'}|\right)^{1-\frac{1}{p}}
		\end{equation}
	\end{lemma}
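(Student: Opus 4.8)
The plan is to apply the Riesz--Thorin interpolation inequality (\ref{eq:norm ine}) at the two endpoints $p=1$ and $p=\infty$, where the induced matrix norms admit the familiar concrete descriptions
$$
\|\bfW\|_{1}=\max_{\text{columns}}(\text{absolute column sum}),\qquad \|\bfW\|_{\infty}=\max_{\text{rows}}(\text{absolute row sum}).
$$
Thus it suffices to bound the maximal absolute column sum of $\bfW$ by the first factor on the right-hand side of (\ref{eq:norm of T(w) and w}) and the maximal absolute row sum by the second factor; substituting these two estimates into (\ref{eq:norm ine}) then yields the lemma at once.

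First I would analyze a single Toeplitz block. For a $1$-d mask, the matrix $\bfT(\bfw)$ in (\ref{eq:T(w) matrix}) has the property that each of its columns, and likewise each of its rows, consists of a contiguous sub-collection of the filter entries, truncated near the boundary on account of the zero padding. Consequently both the absolute column sums and the absolute row sums of a $1$-d Toeplitz matrix are bounded above by the full sum of the absolute values of its filter entries. Iterating this observation one level up for the block-Toeplitz matrix $\bfT(\bfw_{i,j})$ in (\ref{eq:T(w) 2-d single channel}), whose blocks are themselves $1$-d Toeplitz matrices built from the rows of the filter $\bfw_{i,j}$, I obtain that every absolute column sum and every absolute row sum of $\bfT(\bfw_{i,j})$ is at most $\sum_{i'=1}^{2f+1}\sum_{j'=1}^{2f+1}|w_{i,j,i',j'}|$.

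Next I would assemble these block estimates into bounds for the full matrix $\bfW$, using its $c_{out}\times c_{in}$ block form in (\ref{eq:T(w) 2-d multiple channels}). Any column of $\bfW$ lies in a single block-column indexed by an input channel $j$, so its absolute column sum is the sum, over the $c_{out}$ stacked blocks $\bfT(\bfw_{i,j})$ with $1\leq i\leq c_{out}$, of the corresponding absolute column sums within those blocks; by the single-block estimate this is at most $\sum_{i=1}^{c_{out}}\sum_{i'=1}^{2f+1}\sum_{j'=1}^{2f+1}|w_{i,j,i',j'}|$, and maximizing over $j$ bounds $\|\bfW\|_{1}$. Symmetrically, any row of $\bfW$ lies in a single block-row indexed by an output channel $i$, so its absolute row sum is at most $\sum_{j=1}^{c_{in}}\sum_{i'=1}^{2f+1}\sum_{j'=1}^{2f+1}|w_{i,j,i',j'}|$, and maximizing over $i$ bounds $\|\bfW\|_{\infty}$.

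The routine parts are the explicit column-sum and row-sum formulas for the induced $1$- and $\infty$-norms and the assembly of the block estimates. The one point requiring care, which I expect to be the main obstacle, is justifying the single-block claim that every absolute column and row sum of the zero-padded Toeplitz matrix is dominated by the full filter sum: one must verify that zero padding only deletes filter coefficients near the boundary and never repeats or introduces coefficients within a given column or row, so that the boundary sums are genuinely dominated by the interior sum rather than merely comparable to it. Once this is checked at both levels of the nested Toeplitz structure, the interpolation inequality (\ref{eq:norm ine}) closes the argument.
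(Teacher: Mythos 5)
Your proposal is correct and follows essentially the same route as the paper's proof: Riesz--Thorin interpolation between the $\ell_1$ and $\ell_\infty$ induced norms, block-wise bounds on the column and row sums of $\bfW$ from its $c_{out}\times c_{in}$ block structure, and the single-block estimate $\|\bfT(\bfw_{i,j})\|_{1},\|\bfT(\bfw_{i,j})\|_{\infty}\leq\sum_{i'}\sum_{j'}|w_{i,j,i',j'}|$. The only difference is that you spell out why the zero-padded Toeplitz structure makes each column and row contain each filter coefficient at most once, a point the paper asserts without elaboration.
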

	\begin{proof}
		By the Riesz-Thorin interpolation theorem (\ref{eq:norm ine}), we have
		\begin{equation} \label{eq:T(w) ine 1}
			\|\bfW\|_{p}\leq\|\bfW\|_{1}^{\frac{1}{p}}\|\bfW\|_{\infty}^{1-\frac{1}{p}}.
		\end{equation}
		Since
		\begin{equation*}
			\bfW=\begin{bmatrix}
				\bfT(\bfw_{1,1}) & \bfT(\bfw_{1,2}) & \cdots & \bfT(\bfw_{1,c_{in}})\\
				\bfT(\bfw_{2,1}) & \bfT(\bfw_{2,2}) & \cdots & \bfT(\bfw_{2,c_{in}})\\
				\vdots & \vdots & \ddots & \vdots\\
				\bfT(\bfw_{c_{out},1}) & \bfT(\bfw_{c_{out},2}) & \cdots & \bfT(\bfw_{c_{out},c_{in}})\\
			\end{bmatrix},
		\end{equation*}
		we obtain
		\begin{equation} \label{eq:T(w) ine 2}
			\|\bfW\|_{1}\leq\max_{1\leq j\leq c_{in}}\sum_{i=1}^{c_{out}}\|\bfT(\bfw_{i,j})\|_{1}
		\end{equation}
		and
		\begin{equation} \label{eq:T(w) ine 3}
			\|\bfW\|_{\infty}\leq\max_{1\leq i\leq c_{out}}\sum_{j=1}^{c_{in}}\|\bfT(\bfw_{i,j})\|_{\infty}.
		\end{equation}
		Moreover, by (\ref{eq:T(w) element}) and (\ref{eq:T(w) 2-d single channel}), we have for $i=1,\cdots,c_{out}$ and $j=1,\cdots,c_{in}$,
		\begin{equation} \label{eq:T(w) ine 4}
			\|\bfT(\bfw_{i,j})\|_{1}\leq\sum_{i'=1}^{2f+1}\sum_{j'=1}^{2f+1}|w_{i,j,i',j'}|
		\end{equation}
		and
		\begin{equation} \label{eq:T(w) ine 5}
			\|\bfT(\bfw_{i,j})\|_{\infty}\leq\sum_{i'=1}^{2f+1}\sum_{j'=1}^{2f+1}|w_{i,j,i',j'}|.
		\end{equation}
		Combining (\ref{eq:T(w) ine 1}), (\ref{eq:T(w) ine 2}), (\ref{eq:T(w) ine 3}), (\ref{eq:T(w) ine 4}) and (\ref{eq:T(w) ine 5}), we get
		\begin{equation*}
			\begin{aligned}
					\|\bfW\|_{p}
					&\leq
					\|\bfW\|_{1}^{\frac{1}{p}}\|\bfW\|_{\infty}^{1-\frac{1}{p}}
					\\
					&\leq
					\left(\max_{1\leq j\leq c_{in}}\sum_{i=1}^{c_{out}}\|\bfT(\bfw_{i,j})\|_{1}\right)^{\frac{1}{p}}\left(\max_{1\leq i\leq c_{out}}\sum_{j=1}^{c_{in}}\|\bfT(\bfw_{i,j})\|_{\infty}\right)^{1-\frac{1}{p}}
					\\
					&\leq
					\left(\max_{1\leq j\leq c_{in}}\sum_{i=1}^{c_{out}}\sum_{i'=1}^{2f+1}\sum_{j'=1}^{2f+1}|w_{i,j,i',j'}|\right)^{\frac{1}{p}}\left(\max_{1\leq i\leq c_{out}}\sum_{j=1}^{c_{in}}\sum_{i'=1}^{2f+1}\sum_{j'=1}^{2f+1}|w_{i,j,i',j'}|\right)^{1-\frac{1}{p}}
			\end{aligned}
		\end{equation*}
		and prove the lemma.
	\end{proof}
	
	We could now apply Theorem \ref{thm:convergence of DNN with sc} with Lemma \ref{lemma:norm of T(w) and w} to obtain convergence of ResNets.
	\begin{theorem} \label{thm:convergence of ResNet}
		Let
\begin{itemize}
\item $\bfq=(q_{n})_{n=0}^{\infty}$ with $q_{0}=1$, $q_{n}\in\bbN_{+}$ for $n\in\bbN$, and $\|\bfq\|_{\infty}<+\infty$,
\item $\bfc:=\{\bfc^{(n)}\}_{n=0}^{\infty}$ with $\bfc^{(n)}:=(c^{(n)}_{0},\cdots,c^{(n)}_{q_{n}})\in\bbN_{+}^{q_{n}}$ and $\sup_{n\in\bbN}\|\bfc^{(n)}\|_{\infty}<+\infty$,
\item   $\bff:=(\bff^{(n)})_{n=0}^{\infty}$ with $\bff^{(n)}:=(f^{(n)}_{1},\cdots,f^{(n)}_{q_{n}})\in\bbN_{+}^{q_{n}}$, and $\sup_{n\in\bbN}\|\bff^{(n)}\|_{\infty}<+\infty$,
\item	$\bfw:=\{\bfw^{(n)}\}_{n=0}^{\infty}$ with  $\bfw^{(n)}:=(\bfw^{(n)}_{1},\cdots,\bfw^{(n)}_{q_{n}})\in\prod_{m=1}^{q_{n}}\bbR^{(2f^{(n)}_{m}+1)\times(2f^{(n)}_{m}+1)\times c^{(n)}_{m}\times c^{(n)}_{m-1}}$, and
     \item $\bfw^{(n)}_{m}:=(\bfw^{(n)}_{m,i,j})_{i=1,j=1}^{c^{(n)}_{m},c^{(n)}_{m-1}}$, $\bfw^{(n)}_{m,i,j}:=(w^{(n)}_{i,j,i',j'})_{i'=1,j'=1}^{2f^{(n)}_{m}+1,2f^{(n)}_{m}+1}$,
		$\bfw^{(0)}=(\bfw^{(0)}_{1})$ with $\bfw^{(0)}_{1}=\bfw_{s}$,
		\item $\bfb:=\{\bfb^{(n)}\}_{n=0}^{\infty}$ with $\bfb^{(n)}:=(\bfb^{(n)}_{1},\cdots,\bfb^{(n)}_{q_{n}})\in\prod_{m=1}^{q_{n}}\bbR^{d\times d\times c^{(n)}_{m}}$ and $\bfb^{(n)}_{m}:=(b^{(n)}_{m,1}\bfE_{d},\cdots, b^{(n)}_{m,c^{(n)}_{m}}\bfE_{d})$.
\end{itemize}
If
		\begin{equation} \label{eq:resnet cond 1}
			\sum_{n=0}^{\infty}\prod_{m=1}^{q_{n}}
			\left(\max_{1\leq j\leq c_{m-1}^{(n)}}\sum_{i=1}^{c_{m}^{(n)}}\sum_{i'=1}^{2f+1}\sum_{j'=1}^{2f+1}|w_{i,j,i',j'}|\right)^{\frac{1}{p}}\left(\max_{1\leq i\leq c_{m}^{(n)}}\sum_{j=1}^{c_{m-1}^{(n)}}\sum_{i'=1}^{2f+1}\sum_{j'=1}^{2f+1}|w_{i,j,i',j'}|\right)^{1-\frac{1}{p}}<+\infty
		\end{equation}
		and
		\begin{equation} \label{eq:resnet cond 2}
			\sum_{n=0}^{\infty}\sum_{m=1}^{q_{n}}P_{m}^{(n)}\left(\sum_{i=1}^{c^{(n)}_{m}}|b^{(n)}_{m}|^{p}\right)^{\frac{1}{p}}<+\infty
		\end{equation}
		where
		\begin{equation}
			P_{m}^{(n)} = \prod_{m'=m+1}^{q_{n}}
			\left[
			\left(\max_{1\leq j\leq c_{m'-1}^{(n)}}\sum_{i=1}^{c_{m'}^{(n)}}\sum_{i'=1}^{2f+1}\sum_{j'=1}^{2f+1}|w_{i,j,i',j'}|\right)^{\frac{1}{p}}
			\left(\max_{1\leq i\leq c_{m'}^{(n)}}\sum_{j=1}^{c_{m'-1}^{(n)}}\sum_{i'=1}^{2f+1}\sum_{j'=1}^{2f+1}|w_{i,j,i',j'}|\right)^{1-\frac{1}{p}}
			\right],
		\end{equation}
		then network (\ref{eq:structure of ResNet}) converges pointwise on $[0,1]^{d\times d\times c_{in}}$.
	\end{theorem}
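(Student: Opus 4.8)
The plan is to recognize the ResNet (\ref{eq:structure of ResNet}) as a special instance of the deep neural network with shortcut connections (\ref{eq:structure of DNN with sc}) via the matrix--vector reformulation (\ref{eq:matrix resnet}) developed in Section \ref{sec:drn}, and then to verify that hypotheses (\ref{eq:resnet cond 1}) and (\ref{eq:resnet cond 2}) imply the two summability conditions (\ref{eq:DNN with sc cond1}) and (\ref{eq:DNN with sc cond2}) required by Theorem \ref{thm:convergence of DNN with sc}. The bridge between the two settings is Lemma \ref{lemma:norm of T(w) and w}, which bounds the induced $\ell^p$-norm of each convolution matrix $\bfW^{(n)}_{m}=\bfT(\bfw^{(n)}_{m})$ by a mixed $\ell^1$--$\ell^\infty$ expression in the filter-mask entries. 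Concretely, I would set $d_{res}=d^{2}c^{(n)}_{0}$, identify $\bfW^{(n)}_{m}=\bfT(\bfw^{(n)}_{m})$ and $\tbfb^{(n)}_{m}=\vec(\bfb^{(n)}_{m})$, absorb the sampling layer as the $k=0$ block, and note that the assumptions $\sup_{n}\|\bfc^{(n)}\|_{\infty}<+\infty$ and $\sup_{n}\|\bff^{(n)}\|_{\infty}<+\infty$ keep all implicit constants uniformly controlled as the depth grows.

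For the weight condition, applying Lemma \ref{lemma:norm of T(w) and w} to each factor gives
$$
\|\bfW^{(n)}_{m}\|_{p}\le\Bigl(\max_{j}\sum_{i}\sum_{i',j'}|w_{i,j,i',j'}|\Bigr)^{\frac1p}\Bigl(\max_{i}\sum_{j}\sum_{i',j'}|w_{i,j,i',j'}|\Bigr)^{1-\frac1p},
$$
so that $\prod_{m=1}^{q_{n}}\|\bfW^{(n)}_{m}\|_{p}$ is dominated term-by-term by the $n$-th summand of (\ref{eq:resnet cond 1}). Summing over $n$ then yields (\ref{eq:DNN with sc cond1}) directly.

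For the bias condition, the essential observation is that each $\bfb^{(n)}_{m}=(b^{(n)}_{m,1}\bfE_{d},\dots,b^{(n)}_{m,c^{(n)}_{m}}\bfE_{d})$ vectorizes to a vector consisting of $d^{2}$ repeated copies of each scalar $b^{(n)}_{m,j}$, whence $\|\tbfb^{(n)}_{m}\|_{p}=d^{2/p}\bigl(\sum_{j=1}^{c^{(n)}_{m}}|b^{(n)}_{m,j}|^{p}\bigr)^{1/p}$. Combining this with the same Lemma bound applied to the factors $\|\bfW^{(n)}_{m'}\|_{p}$ for $m'>m$ identifies the product $\prod_{m'=m+1}^{q_{n}}\|\bfW^{(n)}_{m'}\|_{p}$ with $P^{(n)}_{m}$, so that (\ref{eq:resnet cond 2}) delivers (\ref{eq:DNN with sc cond2}) up to the harmless constant factor $d^{2/p}$. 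With both summability conditions established, Theorem \ref{thm:convergence of DNN with sc} immediately yields pointwise convergence of the shortcut network, hence of the ResNet, on $[0,1]^{d\times d\times c_{in}}$.

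The main obstacle I anticipate is the careful bookkeeping in the bias step: correctly tracking the $d^{2/p}$ factor produced by the all-ones block structure of $\bfb^{(n)}_{m}$, and confirming that the product $\prod_{m'>m}\|\bfW^{(n)}_{m'}\|_{p}$ aligns exactly with the definition of $P^{(n)}_{m}$. The weight step is a clean term-by-term domination, but the bias step requires matching indices between the $\ell^p$-norm of the vectorized bias and the mixed $\ell^1$--$\ell^\infty$ structure of the Lemma bound, together with a check that the uniform width and filter-size bounds prevent the accumulated constants from diverging.
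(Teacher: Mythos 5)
Your proposal is correct and follows essentially the same route as the paper's own proof: identify the ResNet with the shortcut-connected DNN via $\bfW^{(n)}_{m}=\bfT(\bfw^{(n)}_{m})$, $\tbfb^{(n)}_{m}=\vec(\bfb^{(n)}_{m})$, invoke Lemma \ref{lemma:norm of T(w) and w} to convert condition (\ref{eq:resnet cond 1}) into (\ref{eq:DNN with sc cond1}) and condition (\ref{eq:resnet cond 2}) into (\ref{eq:DNN with sc cond2}), and conclude by Theorem \ref{thm:convergence of DNN with sc}. In fact your bias-step bookkeeping, with the explicit factor $d^{2/p}$ from vectorizing the all-ones blocks $\bfE_{d}$, is more detailed than the paper's proof, which asserts this implication without computation.
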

	\begin{proof}
		  Note that network (\ref{eq:structure of ResNet}) is equivalent to (\ref{eq:structure of DNN with sc}) where the weight matrices associated with the filter masks and the bias vectors are given by $\bfW:=\{\bfW^{(n)}\}_{n=0}^{\infty}$ with $\bfW^{(n)}:=(\bfW^{(n)}_{1},\cdots,\bfW^{(n)}_{q_{n}})$, $\bfW^{(n)}_{m}=\bfT(\bfw^{(n)}_{m})\in\bbR^{d^{2}c^{(n)}_{m}\times d^{2}c^{(n)}_{m-1}}$ and $\bfW_{s}=\bfT(\bfw_{s})$, and $\tbfb:=\{\tbfb^{(n)}\}_{n=0}^{\infty}$ with $\tbfb^{(n)}:=(\tbfb^{(n)}_{1},\cdots,\tbfb^{(n)}_{q_{n}})$, $\tbfb^{(n)}_{m}=\vec(\bfb^{(n)}_{m})\in\bbR^{d^{2}c^{(n)}_{m}}$ and $\tbfb_{s}=\vec(\bfb_{s})$.
		Also, by Lemma \ref{lemma:norm of T(w) and w} and condition (\ref{eq:resnet cond 1}), (\ref{eq:DNN with sc cond1}) holds under the assumptions of this theorem. Furthermore, $\sup_{n\in\bbN}\|\bfc^{(n)}\|_{\infty}<+\infty$ and (\ref{eq:resnet cond 2}) ensure that (\ref{eq:DNN with sc cond2}) holds true as well. Thus, network (\ref{eq:structure of ResNet}) converges pointwise on $[0,1]^{d\times d\times c_{in}}$.
	\end{proof}

	From the theoretical results above, one sees that if the conditions (\ref{eq:resnet cond 1}) and (\ref{eq:resnet cond 2}) hold, the network will learn the identity maps as the depth tends to infinity, which verifies the design spirit of ResNets \cite{KaimingHe}.

\section{Numerical Experiments} \label{sec:numerical_experiments}
\setcounter{equation}{0}
	In this section, we shall conduct experiments with deep ResNets on standard image classification benchmarks CIFAR10 to verify our theory with the numerical results. To be specific, we shall train a sufficient deep ResNet on the benchmark data until an over 90\% accuracy is achieved. In this situation, the network is considered to be convergent. We then compute the partial sums in the series in the sufficient conditions (\ref{eq:resnet cond 1}) and (\ref{eq:resnet cond 2}). If the partial sums are bounded as the depth of the ResNet increases then the theoretical result Theorem \ref{thm:convergence of ResNet} on the convergence of the ResNets is verified.

We now described our experiments. Our architecture is outlined in table \ref{table:1}. Each residual block has the form
$$
\relu\left(\bfx + \bfw_{4}\bigodot_{m=1}^{3}\relu\left(\bfw_{m}\ast \cdot + \bfb_{m}\right)(\bfx) + \bfb_{4}\right),
 $$
 where $\bfw_{m}, \bfb_{m}, i=1,\cdots,4$ are filter masks and bias vectors with dimensions determined by the shape of filter masks and the number of channels. All the convolutions in our network are with stride 1, and zero padding are used in all convolutions except for the global average pooling in the end of our network. Note that our transformation is dimensionality-preserving, and the dimensions are always matched between the head and the tail of residual blocks.
	
	We implemented and trained our model with PyTorch framework, using a momentum optimizer with momentum 0.9, and batch size 128. The technique of weight decayness is not involved in the training process of our model. The initial learning rate is 0.05, which drops by a factor 10 at 60, 90 and 120 epochs. The model reaches peak performance at around 30k steps for CIFAR10, which takes about 20$h$ on a single NVIDIA GTX3090 GPU. Our code can be easily derived from an open source implementation\footnote{https://github.com/pytorch/vision/blob/main/torchvision/models/resnet.py} by adjusting the residual components and model architecture. All of the initial parameters in our code are generated by the default methods in PyTorch.
	
	The major difference between our model and the standard models in \cite{KaimingHe} is that no max pooling or batch normalization is involved in our model and thus we add biases in our model. Though the structure and techniques we used in our model are quite simple, we still obtain $91\%$ accuracy on the benchmark of CIFAR10.
	\begin{table}[htbp]
		\centering
		\caption{Architecture for CIFAR10 (241 convolutions, 6.4M parameters)}
		\label{table:1}
		\begin{tabular}{|c|c|c|}
			\hline
			weight dimensions & bias dimensions & description \\
			\hline
			3$\times$3$\times$3$\times$256 & 256 & 1 standard convolution \\
			\hline
			$
				\begin{bmatrix}
					1\times1\times256\times64 \\
					3\times3\times64\times64 \\
					3\times3\times64\times64 \\
					1\times1\times64\times256
				\end{bmatrix}
			$
			&
			$
			\begin{bmatrix}
				64 \\
				64 \\
				64 \\
				256
			\end{bmatrix}
			$
			&
			60 residual blocks\\
			\hline
			- & - & 32$\times$32 global average pooling \\
			256$\times$classes & classes & linear map \\
			\hline
		\end{tabular}
	\end{table}
	\begin{figure}[htbp]
		\centering
		\label{fig:accuracy}
		\includegraphics[scale=0.5]{./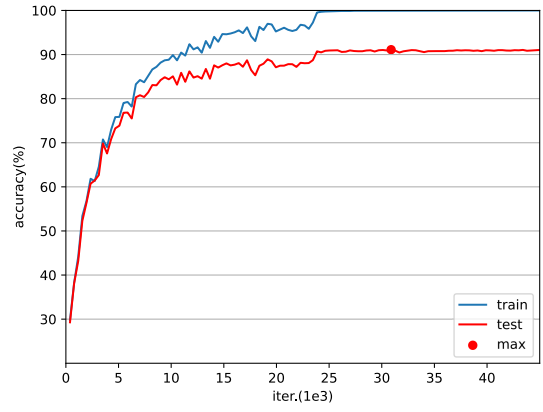}
		\caption{Convergence plot of best model for CIFAR10}
	\end{figure}
	\begin{figure}[htbp]
		\centering
		\label{fig:sufficient cond}
		\includegraphics[scale=0.5]{./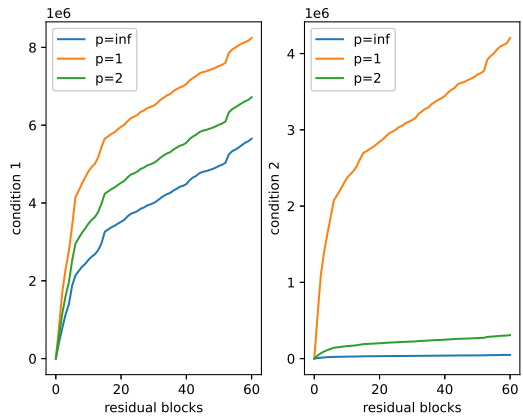}
		\caption{Numerical results for two sufficient conditions}
	\end{figure}

	As is shown by figure \ref{fig:sufficient cond}, the sufficient conditions in Theorem \ref{thm:convergence of ResNet} are indeed satisfied.

	\newpage	
	{\small
	\bibliographystyle{amsplain}
	
}

\end{document}